\newif\ifisarxiv
\def\Vol{{\textnormal{R-DPP}}}
\def\DPP{{\mathrm{DPP}}}
\def\VS{{\mathrm{VS}}}
\def\nnz{{\mathrm{nnz}}}
\def\poly{{\mathrm{poly}}}
\def\simiid{\overset{\textnormal{\fontsize{6}{6}\selectfont
i.i.d.}}{\sim}}
\def\tinydots{\textnormal{\fontsize{6}{6}\selectfont \dots}}
\def\sigmat{\tilde{\sigma}}
\def\xbt{\widetilde{\x}}
\def\vbh{\widehat{\v}}
\def\Xt{\widetilde{X}}
\def\Kt{\widetilde{K}}
\def\Lb{\mathbf{L}}
\def\Y{\mathbf Y}
\def\R{\mathbf R}
\newcommand{\BlackBox}{\rule{1.5ex}{1.5ex}}  
\DeclareMathOperator*{\argmin}{\mathop{\mathrm{argmin}}}
\def\x{\mathbf x}
\def\v{\mathbf v}
\def\zero{\mathbf 0}
\def\one{\mathbf 1}
\def\st{\tilde{s}}
\def\lt{\tilde{l}}
\def\X{\mathbf X}
\def\B{\mathbf B}
\def\A{\mathbf A}
\def\C{\mathbf C}
\def\V{\mathbf V}
\def\St{\widetilde{S}}
\def\I{\mathbf I}
\def\A{\mathbf A}
\def\P{\mathbf P}
\def\Xt{\widetilde{\mathbf{X}}}
\def\Ot{\widetilde{O}}
\def\E{\mathbb E}
\def\R{\mathbb R} 
\def\tr{\mathrm{tr}}
\def\rank{\mathrm{rank}}
\newcommand{\defeq}{\stackrel{\textit{\tiny{def}}}{=}}
\let\origtop\top
\renewcommand\top{{\scriptscriptstyle{\origtop}}} 
\definecolor{silver}{cmyk}{0,0,0,0.3}
\definecolor{yellow}{cmyk}{0,0,0.9,0.0}
\definecolor{reddishyellow}{cmyk}{0,0.22,1.0,0.0}
\definecolor{black}{cmyk}{0,0,0.0,1.0}
\definecolor{darkYellow}{cmyk}{0.2,0.4,1.0,0}
\definecolor{darkSilver}{cmyk}{0,0,0,0.1}
\definecolor{grey}{cmyk}{0,0,0,0.5}
\definecolor{darkgreen}{cmyk}{0.6,0,0.8,0}
\newcommand{\Red}[1]{{\color{red}  {#1}}}
\newcommand{\Green}[1]{{\color{darkgreen}  {#1}}}
\newcommand{\Blue}[1]{\color{blue}{#1}\color{black}}
\newcommand{\Brown}[1]{{\color{brown}{#1}\color{black}}}
\newcommand{\white}[1]{{\textcolor{white}{#1}}}
\newenvironment{proof}{\par\noindent{\bf Proof\ }}{\hfill\BlackBox\\[2mm]}
\newtheorem{theorem}{Theorem}
\newtheorem{example}{Example}
\newtheorem{lemma}[theorem]{Lemma}
\newtheorem{proposition}[theorem]{Proposition}
\newtheorem{remark}[theorem]{Remark}
\newtheorem{corollary}[theorem]{Corollary}
\newtheorem{definition}{Definition}
\newtheorem{conjecture}[theorem]{Conjecture}
\newtheorem{claim}[theorem]{Claim}
\title[Fast determinantal point processes]
{Fast determinantal point processes\\
  via distortion-free intermediate sampling}
\begin{document}
\maketitle

\begin{abstract}
Given a fixed $n\times d$ matrix $\X$, where $n\gg d$, we study the
complexity of sampling from a 
distribution over all subsets of rows where the probability of a subset is
proportional to the squared volume of the parallelepiped spanned by the rows (a.k.a.~a determinantal
point process). In this task, it is important
to minimize the preprocessing cost of the procedure (performed once)
as well as the 
sampling cost (performed repeatedly). To that end, we propose
a new determinantal point process algorithm which has the following
two properties, both of which are novel:
(1) a preprocessing step which runs in time
$O\big(\text{number-of-non-zeros}(\X)\cdot\log n\big)+\poly(d)$, and (2)
a sampling step which runs in $\poly(d)$ time, independent of the
number of rows $n$. We achieve this by introducing a new
\textit{regularized} determinantal point process (R-DPP), which serves
as an intermediate distribution in the sampling procedure by reducing
the number of rows from $n$ to $\poly(d)$. Crucially, this
intermediate distribution does not distort the probabilities of the
target sample. Our key novelty in defining the R-DPP 
is the use of a Poisson random variable for controlling the
probabilities of different subset sizes, leading to new determinantal
formulas such as the normalization constant for this distribution. Our algorithm
has applications in many diverse areas where determinantal point
processes have been used, such as 
machine learning, stochastic optimization, data summarization and
low-rank matrix reconstruction.
\end{abstract}

\begin{keywords}%
  determinantal point processes,
  subset selection,
  low-rank approximation
\end{keywords}



\section{Introduction}\label{s:intro}
Determinantal point processes (DPP) form a family of distributions
sampling diverse subsets of points from a given domain, where the
diversity is measured by the squared volume of the parallelepiped
spanned by the points in some predefined space. 
DPPs have found applications 
in a variety of fields such as physics \citep{dpp-physics}, statistics
\citep{dpp-stats}, machine learning \citep{dpp-ml}, computational geometry
\citep{pca-volume-sampling}, graph theory \citep{spanning-trees} and
others. The applications include:
\vspace{-1mm}
\begin{enumerate}
  \item \textit{Data summarization and diverse recommendation}
    \citep[e.g.,][]{dpp-salient-threads,dpp-video,dpp-summarization}:
    selecting a representative sample of items, e.g.~documents in a corpus,
    frames in a video or products in an online store.
    \vspace{-1.5mm}
  \item \textit{Row-based low-rank matrix reconstruction}
    \citep[e.g.,][]{pca-volume-sampling,more-efficient-volume-sampling}:
    determinantal sampling is the optimal way of selecting
    few rows of a matrix that preserve its low-rank approximation.
    \vspace{-1.5mm}
  \item \textit{Stochastic optimization and Monte Carlo sampling}
    \citep[e.g.,][]{dpp-minibatch,dpp-mcmc}: DPP sampling has been used 
    to reduce the variance inherent in i.i.d.~sampling and improve
    convergence. 
  \end{enumerate}
  \vspace{-2mm}
  
   Let $\X\in\R^{n\times d}$ be a tall and thin matrix,
   i.e.~$n\gg d$. Determinantal point process $\DPP(\X)$ is defined as a distribution
over  all $2^n$ subsets $S\subseteq \{1..n\}$ such that
\begin{align}
\Pr(S) \ =\  \frac{\det(\X_S\X_S^\top)}{\det(\I+\X\X^\top)},\label{eq:dpp}
\end{align}
where $\X_S\in\R^{|S|\times d}$ denotes the submatrix of $\X$ containing
rows indexed by $S$.
DPP algorithms are typically divided into a preprocessing step,
which needs to be performed once per given matrix $\X$, and a sampling
step, which happens each time we wish to sample set
$S\sim\DPP(\X)$. In this paper, we improve on the best known time
complexity of both steps by demonstrating:
\begin{enumerate}
\item the first DPP algorithm with \textit{input sparsity time} preprocessing:
$\nnz(\X)\log n + \poly(d)$;
\item the first exact DPP algorithm s.t.~sampling
  takes $\poly(d)$ time, independent of $n$.
\end{enumerate}
Here, $\nnz(\X)$ denotes the number of non-zero entries. Before this
work, the best known DPP algorithms (see Section \ref{s:background}
and references therein) had preprocessing times $\Theta(nd^2)$, or
$\Omega(\nnz(\X)k^2)$ if we allow conditioning on a fixed subset size
$|S|=k$ (which can be as large as $d$), and
sampling times at least $\Omega(n\,|S|)$.
Our DPP algorithm is based on the following general recipe for
sampling from some ``target'' joint distribution:
\ $i)$ generate a larger sample of $\poly(d)$ rows
$\sigma=(\sigma_1,\dots,\sigma_k)\in\{1..n\}^k$ from an
``intermediate'' distribution; \ $ii)$ downsample to a smaller subset using the
``target'' distribution. Crucially, the intermediate sampling is not
allowed to distort the target distribution:
\begin{align*}
\textbf{Goal:}\qquad\sigma\,\overset{i)}{\sim}\,\text{intermediate}\!\overbrace{(\X)}^{n\times
  d}\ \ \text{\small and}\ \  
  S\,\overset{ii)}{\sim}\,\text{target}\hspace{-3mm}\overbrace{(\X_\sigma)}^{\poly(d)\times d}\quad\ \Longrightarrow\quad\quad
\overbrace{\{\sigma_i\}_{i\in S}}^{\sigma_S}\,\sim\,\overbrace{\text{target}(\X)}^{\DPP(\X)}.
\end{align*}
A simplified version of this approach was recently suggested by
\cite{leveraged-volume-sampling}. They sample from a different but
related family of determinantal distributions called \textit{size $k\geq d$
  volume sampling}, which has the unique property that it can play the
role of both the ``intermediate'' and the ``target'' distribution.
DPPs on the other hand do not have that property and no
candidate for an intermediate distribution was previously known.
To that end, we develop a new family of joint distributions:
\textit{regularized} determinantal point  processes (R-DPP). For
a p.s.d.~matrix $\A\in\R^{d\times d}$ and 
a Poisson mean parameter $\Blue{r}>0$ an $\Vol^{\Blue{r}}(\X,\A)$ samples
over all sequences $\sigma\in\bigcup_{k=0}^\infty\{1..n\}^k$ so that
\vspace{-1mm}
\begin{align*}
  \Pr(\sigma) \ \propto \ \det\!\big(\A +
  \X_{\sigma}^\top\X_{\sigma}\big)\ 
  \frac{\Blue{r}^k\text{e}^{-\Blue{r}}}{k!},\qquad\text{where $k$ is
  the length of $\sigma$}.
\end{align*}
Rescaling with
Poisson probabilities $\frac{\Blue{r}^k\text{e}^{-\Blue{r}}}{k!}$ is
\textit{essential} for the normalization constant of this distribution
to have a closed form (a key part of our analysis). We obtain it
via the following new determinantal formula: if
$\sigma = (\sigma_1,\dots,\sigma_K)$ is a sequence of $K$
i.i.d.~samples from $\{1..n\}$, then
\begin{align*}
\text{for}\quad
  K\sim\text{Poisson}(\Blue{r}),\qquad
  \E\Big[\det\!\big(\A+\X_{\sigma}^\top\X_{\sigma}\big)\Big]
  = \det\!\Big(\A + \E\big[\X_{\sigma}^\top\X_{\sigma}\big]\Big),
\end{align*}
where the expectations are over the random variable $K$ as well as the
i.i.d.~samples.
Similar formulas have been known for fixed length $K$ but only in the
unregularized case where $\A=\zero$ 
\citep[see][]{correcting-bias}. Our result
shows that regularization can be introduced by randomizing the
sequence length with Poisson distribution.
In Section~\ref{s:r-dpp} we use this formula to show a number of
connections between R-DPPs and DPPs. For example, the latter can be
obtained from the former when the Poisson mean parameter $\Blue{r}$
converges to 0 along with regularization $\A$ set to $\frac{\Blue{r}}{n}\,\I$: 
\begin{align*}
  \Vol^{\Blue{r}}\!\Big(\,\X,\
  \frac{\Blue{r}}{n}
  \,\I\,\Big)   \
  &\overset{\Blue{r}\rightarrow 0}{\longrightarrow}\ \DPP(\X).
\end{align*}
This means that the family of R-DPPs contains DPPs in its
closure and can thus be viewed as an extension. We also show that DPPs are preserved under subsampling with an
R-DPP:
\begin{align*}
\text{if}\quad
  \sigma\sim\Vol^{\Blue{r}}(\X,\,\I)\quad\text{and}\quad
  S\sim\DPP(\X_\sigma),\quad\text{then}\quad
  \sigma_S\sim\DPP\Big(\text{\scriptsize$\sqrt{\frac{\Blue{r}}{n}}$}\,\X\Big).
\end{align*}
This suggests that R-DPP is a good candidate for an ``intermediate''
distribution when sampling a DPP. To make sampling from R-DPPs
efficient, we further generalize them so that the marginal row
probabilities can be reweighted with an arbitrary
i.i.d.~distribution. On a very high level, our strategy is to
show that when the length of sequence $\sigma$ is sufficiently large in expectation
(but still independent of $n$) and the R-DPP is reweighted by ridge leverage
scores, then it becomes very close to i.i.d.~sampling, so we can use that as a
proposal distribution for a rejection sampling scheme.


In the following section we give some
background and related work on DPP algorithms, then in Section
\ref{s:main} we present our main algorithm (Algorithm \ref{alg:main})
and the associated result (Theorem \ref{t:main}), along with an
example application in low-rank matrix reconstruction.  Section
\ref{s:r-dpp} introduces R-DPPs along with their basic properties and the
remaining Sections \ref{s:correctness} and \ref{s:efficiency} are
devoted to proving Theorem \ref{t:main}. 


\section{Background and related work}\label{s:background}

Several settings have been used in the literature for studying the
complexity of DPP algorithms. Below we review those which are slightly
different than the one we presented in Section \ref{s:intro}, but are
still relevant to our discussion \citep[see][for details]{dpp-ml}.\\[3mm]
\textbf{$\Lb$-ensemble} \ We defined a DPP in terms
of a matrix $\X\in\R^{n\times d}$, where each element $i\in\{1..n\}$ is described
by a row vector $\x_i^\top\in\R^d$
\citep[suggested by][]{pca-volume-sampling,structured-dpp}. 
An equivalent parameterization can be defined 
in terms of the so-called \textit{ensemble} matrix
$\Lb=\X\X^\top\in\R^{n\times n}$, where the $(i,j)$th entry represents
the dot product $\x_i^\top\x_j$. In this case, the probability 
\eqref{eq:dpp} of a subset $S\subseteq \{1..n\}$ can be written as
$\Pr(S)=\frac{\det(\Lb_{S,S})}{\det(\I+\Lb)}$, where
$\Lb_{S,S}\in\R^{|S|\times|S|}$ denotes the submatrix of $\Lb$ with
both row and column entries indexed by $S$. Naturally, one
representation can be converted to the other in preprocessing, so our
results are still useful in the $\Lb$-ensemble case when
$\text{rank}(\Lb)\ll n$. However, since matrix $\Lb$ is much larger
than $\X$, the preprocessing cost may increase.\\[3mm]
\textbf{k-DPP} \ In some practical applications, when a subset of particular
size is desired, a DPP can be restricted only to subsets
$S$ such that $|S|=k$ for some $k\in\{1..d\}$, and referred to as a k-DPP
\citep{k-dpp}. This is equivalent to sampling a set $S$ from a standard
DPP, but accepting the sample only if $|S|=k$. This distribution is also
sometimes called \textit{size $k\, \Blue{\leq}\, d$  volume sampling}
\citep{pca-volume-sampling},
not to be confused with size $k \,\Red{\geq}\, d$ volume sampling
mentioned in Section \ref{s:intro}. The special case of $k=d$ will be further discussed in Section \ref{s:classic}.
An alternative way of controlling the subset size is by rescaling
matrix $\X$ with some $\alpha$ so that the  expected subset size for 
$S\sim\DPP(\alpha\X)$, i.e.~$\E\big[|S|\big]$ matches a
desired value (see Section \ref{s:main} for more details). A
restriction to k-DPP can lead to faster sampling algorithms (when $k$
is small), as discussed in the following sections.
\vspace{3mm}

A classical DPP algorithm introduced by \cite{dpp-independence}
uses the singular value decomposition (SVD)
of either $\Lb$ or $\X$ to produce an exact 
sample $S$ from the DPP in time $O(n\,|S|^2)$. However, this runtime
does not  include the cost of SVD, considered as a preprocessing
step, which takes $O(n^3)$ and $O(nd^2)$ for $\Lb$ and $\X$,
respectively. Since then, a number of methods were proposed to improve
on this basic approach. We survey these techniques in the following
two sections, and present a runtime comparison in Tables
\ref{tab:preprocessing} and \ref{tab:sampling} (we omit the big-O
notation in the tables). 
\vspace{3mm}

\hspace{-3mm}
\begin{minipage}{0.45\textwidth}
  \centering
\begin{tabular}{r||l}
&preprocessing\\
  \hline\hline
SVD  &$nd^2$\\
sketching 
  &$\nnz(\X)k^2+nk^4$\\
MCMC 
  &$nd\cdot \poly(k)$\\
  \hline
\textbf{this paper}
  &$\nnz(\X) + d^3k^2$
\end{tabular}
\captionof{table}{Preprocessing costs for approximate k-DPPs from $\X$
(omitting log terms), compared to our DPP algorithm.}
\label{tab:preprocessing}
\end{minipage}
\hspace{5mm}
\begin{minipage}{0.45\textwidth}
  \centering
\begin{tabular}{r||l}
& sampling\\
  \hline\hline
bottom-up  
  &$n k^2$\\
  i.i.d.+top-down
  &$nk+k^4$\\
  MCMC 
  & $n\cdot \poly(k)$\\
  \hline
\textbf{this paper}
  &$d^3k$
\end{tabular}
\captionof{table}{Sampling cost (after preprocessing) for DPP/k-DPP methods,
 compared to our DPP algorithm ($k=|S|\leq d\leq n$).}
\label{tab:sampling}
\end{minipage}

\subsection{Previous approximate preprocessing techniques}
Approximate preprocessing methods were studied primarily for k-DPPs
(i.e. $|S|=k$) rather than for standard 
DPPs, because bounding the subset size
makes volume approximations easier to control.
\cite{efficient-volume-sampling} and \cite{dpp-salient-threads} suggested to 
use volume-preserving sketching of \cite{volume-sketching} to reduce
the dimension $d$ of matrix $\X$ to $r=\Ot(k^2)$, which
allows approximate sampling from a k-DPP.
Here, the cost of
sketching is $\Ot(\nnz(\X)k^2)$, and it is followed by
computing SVD of an $n\times r$ matrix in time
$\Ot(nk^4)$. Also, \cite{rayleigh-mcmc} proposed to use a fast-mixing
MCMC algorithm whose stationary distribution is a k-DPP, where
preprocessing cost is
$O(nd\cdot\poly(k))$ for matrix $\X$, and
$O(n\cdot\poly(k))$ for matrix $\Lb$ (sampling time
is similar). Other approximation techniques such as Nystrom
\citep{dpp-nystrom} and coresets \citep{dpp-coreset} yield approximate
DPP distributions however their accuracy is data-dependent.
Table \ref{tab:preprocessing} compares the preprocessing costs for the
approximate k-DPP methods offering data-independent accuracy
guarantees with that of our DPP algorithm. Note that our approach is
specifically designed for 
sampling from a full DPP, not a k-DPP. Also, unlike these
approximate methods our algorithm samples exactly from $\DPP(\rho\X)$ for some
$\rho\approx 1$, which is a much more precise guarantee.


\subsection{Sampling a DPP as a mixture of volume samples}
\label{s:classic}
Any DPP is a mixture of so-called \textit{elementary DPPs}
\citep[see][]{dpp-independence,dpp-ml}. These elementary DPPs have been
independently studied in the context of 
\textit{volume sampling}
\citep{avron-boutsidis13}.
For $\X\in\R^{n\times d}$, volume sampling is given by
\begin{align*}
  S\sim\VS(\X)\!:\quad  \Pr(S) =
  \frac{\det(\X_S)^2}{\det(\X^\top\X)}\quad\text{for 
  $S$ s.t.~} |S|=d.
\end{align*}
The mixture decomposition shown by \cite{dpp-independence} implies that 
DPP sampling can be divided into two steps: first sample one
element from the mixture, then 
generate a sample from that elementary DPP.
Specifically, consider the eigendecompositions
  $\Lb = \sum_{i=1}^d\lambda_i \v_i\v_i^\top$ and $\X^\top\X =
  \sum_{i=1}^d\lambda_i\vbh_i\vbh_i^\top$. For convenience, let us put the
eigenvectors of $\Lb$ into a matrix $\V = 
[\v_1,\dots,\v_d]\in\R^{n\times d}$. 
Then $S\sim\DPP(\X)$ can be produced by sampling a subset $T$ of
eigenvector indices (step 1) and then performing volume sampling w.r.t.~the
$n\times |T|$ matrix  constructed from those vectors (step 2): 
\begin{align*}
S\sim\VS\big(\V_{\!*,T}\big),\qquad
  \text{where for each $i\in\{1..d\}$, independently, }\quad\Pr(i\in T) = \frac{\lambda_i}{1+\lambda_i}.
\end{align*}

Here, we used vectors from the decomposition of $\Lb$, but this can be
easily obtained from the decomposition of $\X^\top\X$ during
preprocessing, since $\v_i =   \frac{1}{\sqrt{\lambda_i}}\X\vbh_i$. So
given the eigendecomposition we can sample the set $T$ easily, and it
remains to 
perform the volume sampling step.

\begin{wrapfigure}{r}{0.45\textwidth}
  \vspace{-7mm}
  \centering
\begin{minipage}{0.45\textwidth}
\floatname{algorithm}{\small Algorithm}
\begin{algorithm}[H] 
 \caption{\small Bottom-up volume sampling}\label{alg:bottom-up}
  \begin{algorithmic}[0]
    \STATE \textbf{input:} $\V\in\R^{n\times k}$, s.t.~$\V^\top\V=\I$
    \STATE \textbf{output:} $S\sim\VS(\V)$
    \STATE \textbf{for }$i=1..k$
    \STATE  \quad Sample $\sigma_i \sim \big(\|\V_{1,*}\|^2,\tinydots,\|\V_{n,*}\|^2\big)$
    \STATE \quad$\V\leftarrow \V\Big(\I -
    \frac{(\V_{\sigma_i,*})^\top\V_{\sigma_i,*}}{\|\V_{\sigma_i,*}\|^2}\Big)$
    \STATE \textbf{end for}
    \RETURN $S = \{\sigma_1,\dots,\sigma_k\}$
  \end{algorithmic}
\end{algorithm}
\end{minipage}
\vspace{-4mm}
\end{wrapfigure}
In Table \ref{tab:sampling} we review the
running times for different approaches of sampling
$S\sim\VS\big(\V_{\!*,T}\big)$, 
compared to a different MCMC-based DPP sampler \citep{rayleigh-mcmc} and our algorithm. 
The classical approach from DPP literature
\citep{dpp-independence} samples ``bottom-up'', adding one point
at a time and at each step projecting the remaining points onto the
subspace orthogonal to that point (see Algorithm
\ref{alg:bottom-up}). Curiously, a diametrically opposed
``top-down'' approach of \cite{unbiased-estimates-journal}, which eliminates
points one at a time instead of adding them, achieves the same
asymptotic runtime with high probability. The volume sampling algorithm of
\cite{leveraged-volume-sampling} further improves on this downsampling
strategy by introducing an intermediate i.i.d.~oversampling step, which is
what inspired our approach. Note that we are the first to apply the
``top-down" algorithms to DPP
sampling (they were previously known only in the context of volume
sampling). Unfortunately, in all of these methods  
sampling time is linear\footnote{%
A different strategy was proposed by
\cite{dpp-coreset}, which uses coreset construction to approximately sample from a DPP in time
independent of $n$, however the sampling accuracy is data-dependent.} 
in $n$ (they have to read the matrix $\V_{*,T}$ for each sampled set)
which may not be acceptable when $n\gg d$ and
we need to perform the sampling repeatedly.
Our algorithm, which uses the mixture
decomposition only as a subroutine, samples in time independent of
$n$.


\section{Main result}
\label{s:main}
As discussed in Section \ref{s:intro}, the high level strategy of our
algorithm is to design an ``intermediate'' sampling distribution, which
reduces the size of the matrix $\X$ from $n\times d$ to
$\poly(d)\times d$ while preserving the ``target''
distribution (here, a DPP). Another key property of the intermediate
distribution is that it has be close to i.i.d., so that we can
implement it efficiently. Thus, our algorithm will use an
i.i.d.~sampling distribution defined by a vector 
$l=(l_1,\dots,l_n)$ of importance weights assigned to each row of
$\X$, and use it as a proposal for rejection sampling from the
intermediate distribution. Overall, the three main components of our
method are:
\vspace{-1mm}
\begin{align*}
\overbrace{(1)\ \text{i.i.d.~sampling}\qquad(2)\ \text{rejection
  sampling}}^{\sigma\,\sim\,\text{intermediate}(\X)}\qquad \overbrace{(3)\
  \text{downsampling}}^{S\,\sim\,\text{target}(\X_\sigma)}.
\end{align*}
In Section \ref{s:r-dpp} we define a regularized determinantal point
process (R-DPP) and use it as the intermediate distribution. For the
i.i.d.~sampling weights we use a particular variant of so-called ridge
leverage scores \citep{ridge-leverage-scores}.
Let $\x_i^\top$ denote the $i$th row of $\X$ and
w.l.o.g.~assume that all rows are non-zero.
\begin{algorithm}[H] 
  \caption{~ Fast determinantal point process sampling}
  \label{alg:main}
  \begin{algorithmic}[1]
    \STATE \textbf{input:} $\X\in\R^{n\times d}$, \
    $\A\in\R^{d\times d}$, \ sampling oracle for $i\sim\tilde{l}=(\tilde{l}_1,\dots,\tilde{l}_n)$\\[1mm]
    \STATE \textbf{repeat}\label{line:rep1}
    \vspace{1mm}
    \STATE \quad sample $K \sim
    \mathrm{Poisson}(q)$,\hspace{2.5cm} for
    $q=\lceil 2d\st\rceil$,\quad $\tilde{s} = \tr\big(\A(\I+\A)^{-1}\big)$\label{line:poisson}
    \vspace{1mm}
    \STATE \quad sample $\sigma_1,\tinydots,\sigma_K\simiid (l_1,\tinydots,l_n)$,
    \hspace{1.1cm} for $l_i=\x_i^\top(\I\!+\!\A)^{-1}\x_i$,\  {\small rejection
      sampling via $\tilde{l}$}\label{line:iid}
    \vspace{1mm}
    \STATE \quad sample $\textit{Acc}\sim\!
    \text{Bernoulli}\Big(\frac{\det(\I+\Xt_\sigma^\top\Xt_\sigma)}  
{C_K\det(\I+\A)}\Big)$,\,
    for $\Xt = 
\Big[\!\sqrt{\!\frac{\st}{l_{i}\!(q-\st)}}\,\x_{i}^\top\Big]$,
\     $C_K=(\frac{q}{q-\st})^{K+d}\text{e}^{-\st}$\label{line:acc}
\vspace{1mm}
    \STATE \textbf{until} $\textit{Acc}=\text{true}$\hfill
    \textcolor{grey}{\small  (if $\textit{Acc}=\text{true}$, then $\sigma_1,\tinydots,\sigma_K$ is distributed as an R-DPP)}\label{line:rep2}
    \RETURN $\sigma_{\St}$,\quad where $\St\sim \DPP\big(\Xt_\sigma\big)$ \label{line:sub}
 \end{algorithmic}
\end{algorithm}
Other than matrix $\X$, the algorithm takes additional inputs: matrix
$\A\approx \X^\top\X$ and a sampling oracle for $\tilde{l}$
which approximates $l$. The inputs $\A$ and
$\tilde{l}$ are computed in the preprocessing step, 
which can be easily performed in time $O(nd^2)$, but standard
sketching techniques can be used to achieve input sparsity time preprocessing.
In line \ref{line:sub} of the algorithm we invoke a different DPP
sampling procedure for a matrix of reduced size. This can be for
example the classical algorithm discussed in Section \ref{s:classic}
with the volume sampling part implemented by Algorithm
\ref{alg:bottom-up} and SVD performed exactly. 
Our main result shows that Algorithm \ref{alg:main} runs in 
time independent of $n$ and samples from a determinantal point
process.
The only trade-off coming from approximate preprocessing is that it
samples from $\DPP(\rho\X)$ instead of $\DPP(\X)$, 
for some $\rho\approx 1$. This rescaling only affects the distribution of
sample size $|\sigma_{\St}|$ (and not the conditional probability
given the size), and it also implies a weaker (but more standard)
approximation guarantee based on \textit{total variation} distance.
\begin{definition}[total variation]
A distribution on a finite domain $\Omega$ with probability mass
function $q$ is an $\epsilon$-approximation of a distribution on $\Omega$ with
probability mass function $p$ if 
\begin{align*}
  \frac12\sum_{x\in\Omega}\big|q(x) - p(x)\big|\leq \epsilon.
\end{align*}
\end{definition}

\begin{theorem}\label{t:main}
Suppose that $\epsilon\in[0,1]$, $C\geq 0$ and $\X\in\R^{n\times d}$, $\A\in\R^{d\times d}$,
  $\tilde{l}=(\tilde{l}_1,\dots,\tilde{l}_n)$ satisfy:
  \begin{align}
    (1-\eta)\X^\top\X&\preceq \A\preceq(1+\eta)\X^\top\X,
\qquad\qquad \text{for }\ \eta= \frac\epsilon{4\bar{s}+C\ln9/\epsilon},\label{eq:cond1}\\
    \frac12\x_i^\top(\I+\A)^{-1}\x_i&\leq\, \lt_i\,
\leq\frac32\x_i^\top(\I+\A)^{-1}\x_i ,\qquad \ \text{for all }\ i\in\{1..n\},\label{eq:cond2}
  \end{align}
where $\bar{s}=\max\{1,\,\E[|S|]\}\leq d$ for $S\sim\DPP(\X)$. The
following are true for Algorithm \ref{alg:main}:
  \begin{enumerate}
  \item It returns $\sigma_{\St}\sim \DPP(\rho\X)$, where
    $|\rho-1|\leq\eta$ and $\big|\E[|\sigma_{\St}|]-\E[|S|]\big|\leq\epsilon$;
    \item There is $C=O(1)$ for which this distribution is an
      $\epsilon$-approximation of $\DPP(\X)$;
  \item The algorithm has time complexity
    $O(d^3\bar{s}\log^2\!1/\delta)$ w.p.~at least $1-\delta$.
    \end{enumerate}
  \end{theorem}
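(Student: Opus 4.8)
The plan is to exploit the three-stage structure of Algorithm~\ref{alg:main}: the \textbf{repeat} loop (lines~\ref{line:rep1}--\ref{line:rep2}) is a rejection sampler whose accepted output $\sigma$ is an R-DPP, and line~\ref{line:sub} downsamples it by a DPP into the target. I would establish Part~1 by chaining two facts. \emph{(i)} Conditioned on acceptance, $\sigma$ is distributed as $\Vol^{q-\st}(\Xt,\I)$ in its reweighted form, with i.i.d.\ marginals $\propto l_i$. \emph{(ii)} Subsampling such an R-DPP by $\St\sim\DPP(\Xt_\sigma)$ yields a DPP on the original rows, as developed for R-DPPs in Section~\ref{s:r-dpp}. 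Parts~2 and~3 then reduce, respectively, to a total-variation comparison of $\DPP(\rho\X)$ with $\DPP(\X)$ and to a running-time analysis of the loop.

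For fact~\emph{(i)} I would read lines~\ref{line:poisson}--\ref{line:iid} as a proposal that draws a $\mathrm{Poisson}(q)$ length and then i.i.d.\ indices $\propto l_i$ (the inner rejection against $\tilde l$ is unbiased and cheap because \eqref{eq:cond2} keeps each ratio $l_i/\tilde l_i$ in $[2/3,2]$). The accepted law equals the proposal times the acceptance probability of line~\ref{line:acc}, so I must verify two things. First, this probability never exceeds $1$, i.e.\ $\det(\I+\Xt_\sigma^\top\Xt_\sigma)\le C_K\det(\I+\A)$ for every $\sigma$; whitening by $(\I+\A)^{-1/2}$ rewrites the left side as $\det(\I+\A)\,\det(\B+t\sum_j\u_{\sigma_j}\u_{\sigma_j}^\top)$ with $\B=(\I+\A)^{-1}$, unit vectors $\u_{\sigma_j}$, and $t=\st/(q-\st)$, and since $\tr\B=d-\st$ exactly the bound $\det(\B+t\sum_j\u_{\sigma_j}\u_{\sigma_j}^\top)\le(\frac{d-\st+tK}{d})^d$ reduces the claim to $e^{tK}\le(1+t)^{K+d}$ for moderate $K$, while for large $K$ this polynomial-in-$K$ bound is dominated directly by the exponentially growing $C_K=(q/(q-\st))^{K+d}e^{-\st}$. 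This two-regime balance is exactly what the Poisson factor in $C_K$ is engineered to provide. Second, the accepted density must coincide with the R-DPP: the i.i.d.\ weights cancel between proposal and target so that acceptance depends only on $K$ and $\det(\I+\Xt_\sigma^\top\Xt_\sigma)$, while the reweighting $\Xt=[\sqrt{\st/(l_i(q-\st))}\,\x_i^\top]$ is chosen so that against the marginals $\propto l_i$ the leverage factors cancel in expectation, giving $\E[\Xt_\sigma^\top\Xt_\sigma]\propto\X^\top\X$ and hence the closed-form normalization $\det(\I+\frac{\st}{L}\X^\top\X)$, $L=\sum_i l_i$, via the Poisson determinantal formula of the introduction (generalized to reweighted marginals in Section~\ref{s:r-dpp}).

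For fact~\emph{(ii)} and the scaling constant I would invoke the reweighted, $\I$-regularized version of the R-DPP-to-DPP subsampling identity (the plain case $\sigma\sim\Vol^{r}(\X,\I),\,S\sim\DPP(\X_\sigma)\Rightarrow\sigma_S\sim\DPP(\sqrt{r/n}\,\X)$ is stated in the introduction). Carrying the scale factor through the reduction and matching $\A$ to $\X^\top\X$ via \eqref{eq:cond1} gives $\sigma_{\St}\sim\DPP(\rho\X)$ with $|\rho-1|\le\eta$. The expected-size estimate then follows from $\E[|\sigma_{\St}|]=\sum_i\rho^2\lambda_i/(1+\rho^2\lambda_i)$ (eigenvalues $\lambda_i$ of $\X^\top\X$), whose derivative in $\rho$ near $1$ is at most $2\sum_i\lambda_i/(1+\lambda_i)\le 2\bar s$, so $|\E[|\sigma_{\St}|]-\E[|S|]|\le 2\bar s\,\eta\le\epsilon$ by the definition of $\eta$. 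For Part~2, the structural fact I would use, read off the elementary-DPP mixture of Section~\ref{s:classic}, is that rescaling $\X\mapsto\rho\X$ leaves the law of the sample conditioned on its size unchanged and only reshapes the size distribution (a Poisson binomial with success probabilities $\lambda_i/(1+\lambda_i)$). Hence the total variation between $\DPP(\rho\X)$ and $\DPP(\X)$ collapses to that between their size distributions, which I would bound by $\epsilon$ using $|\rho-1|\le\eta$ and the explicit $\eta=\epsilon/(4\bar s+C\ln(9/\epsilon))$; the $\bar s$ term controls the bulk shift of the Poisson binomial and the $\ln(1/\epsilon)$ term its tail.

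For Part~3, the expected number of loop iterations is the reciprocal of the marginal acceptance probability, and the Poisson determinantal formula makes this \emph{exactly} computable as $((q-\st)/q)^d\,\det(\I+\frac{\st}{L}\X^\top\X)/\det(\I+\A)$, which \eqref{eq:cond1} pins to $\Theta(1)$ (near $e^{-1/2}$); no concentration is needed for the expectation, and a geometric tail converts it to $O(\log(1/\delta))$ iterations with high probability. Each iteration draws $K\sim\mathrm{Poisson}(q)$ with $q=\lceil2d\st\rceil=O(d\bar s)$, forms and evaluates a $d\times d$ determinant in $O(Kd^2+d^3)$ time, and line~\ref{line:sub} runs a DPP on the $K\times d$ matrix $\Xt_\sigma$ in $O(d^3\bar s)$ time; a Poisson tail on $K$ supplies the second $\log(1/\delta)$ factor, giving the stated $O(d^3\bar s\log^2(1/\delta))$ bound. \textbf{The main obstacle} is making fact~\emph{(i)} exact rather than approximate: the single constant $C_K$ must simultaneously correct the proposal's Poisson rate from $q$ down to $q-\st$ and act as a valid rejection envelope for \emph{every} sequence, so the worst-case determinantal inequality and the closed-form normalization must agree precisely. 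Reconciling these opposing demands is what forces the Poisson randomization of the sequence length, and checking it in the regularized, reweighted regime---rather than the unregularized fixed-length setting of prior work---is the crux.
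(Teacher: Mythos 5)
Your architecture for Parts 1 and 3 is essentially the paper's: the \textbf{repeat} loop is shown to realize a reweighted R-DPP $\Vol_l^{q-\st}(\Xt,\I)$ by verifying that $C_K$ is a valid rejection envelope and that it converts the $\mathrm{Poisson}(q)$ proposal into the $\mathrm{Poisson}(q-\st)$ rate of the target (this is the paper's Lemma~\ref{l:correctness}); the downsampling step is handled by the composition property (Theorem~\ref{t:composition}), giving $\rho^2=\st/\hat{s}$ with $\hat s=\sum_i l_i$; and the acceptance probability is computed in closed form via the Poisson Cauchy--Binet identity (Lemma~\ref{l:cb}), exactly as in Lemma~\ref{l:efficiency}. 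The one place where your sketch is materially thinner than the paper is the envelope inequality itself: after AM--GM you reduce to $\mathrm{e}^{tK}\le(1+t)^{K+d}$, which (since $\mathrm{e}^t>1+t$) genuinely fails for large $K$, so your two-regime split is forced; you would still need to pin down the crossover and verify the polynomial-versus-exponential comparison there. The paper avoids this by proving the inequality uniformly in $k$ (Lemma~\ref{l:ineq}) via an induction outward from $k=q$, which is the cleaner route and is really the only nontrivial computation you have left open.

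Your Part 2 is a genuinely different argument from the paper's, and it is a nice one. The paper (Lemma~\ref{l:tv}) bounds total variation by splitting on $|S|\le k$ versus $|S|>k$ with $k=2\bar s+80\ln(9/\epsilon)$, controlling the tail by a DPP concentration inequality (Lemma~\ref{l:concent}) and the bulk by the determinant perturbation bound (Lemma~\ref{l:det-bound}); this is what forces the $C\ln(9/\epsilon)$ term in $\eta$. You instead observe that $\Pr(S\mid |S|=k)\propto\det(\X_S\X_S^\top)$ is invariant under $\X\mapsto\rho\X$, so the total variation between $\DPP(\rho\X)$ and $\DPP(\X)$ equals that between their size distributions, which are Poisson binomials with success probabilities $\lambda_i/(1+\lambda_i)$ and $\rho^2\lambda_i/(1+\rho^2\lambda_i)$. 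If you then finish with the product-coupling bound $d_{TV}\le\sum_i|p_i-p_i'|\le|\rho^2-1|\sum_i\lambda_i/(1+\lambda_i)\le 3\eta\bar s$, you get the $\epsilon$-approximation from the $4\bar s$ term alone --- no concentration lemma and, in fact, no need for the logarithmic term in $\eta$ at all. Your own phrasing (``the $\ln(1/\epsilon)$ term [controls] its tail'') suggests you were still planning a bulk/tail split at the level of the size distribution; the coupling bound makes that unnecessary and yields a strictly simpler proof of Part~2 than the one in Appendix~\ref{a:tv}.
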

  \begin{proposition}\label{p:preprocessing}
Matrix $\A$ and distribution $\lt$ satisfying
conditions  \eqref{eq:cond1} and \eqref{eq:cond2} can be obtained from matrix
  $\X$ in time
  $O(\nnz(\X)\log n + d^3\log d\cdot (\bar{s}+\log1/\epsilon)^2/\epsilon^2)$. 
\end{proposition}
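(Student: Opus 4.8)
The plan is to produce the two objects separately: first the matrix $\A$ satisfying the relative spectral bound \eqref{eq:cond1}, and then, treating $\A$ as fixed, the sampling oracle for $\tilde{l}$ satisfying the ridge-leverage-score bound \eqref{eq:cond2}. Both reduce to standard input-sparsity sketching primitives, and the reason the two displayed error windows are achievable is that they are multiplicative (constant-factor for $\tilde{l}$, and $(1\pm\eta)$ for $\A$), which is exactly what leverage-score sampling and Johnson--Lindenstrauss norm estimation deliver.

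For $\A$, I would first compute a constant-factor subspace embedding $\S\X$ using a sparse (CountSketch/OSNAP-type) matrix $\S$, costing $O(\nnz(\X))$, and take a QR factorization $\S\X=\Q\Rb$ with $\Rb\in\R^{d\times d}$, costing $\poly(d)$; since $\S$ is a subspace embedding, $\X\Rb^{-1}$ has singular values bounded away from $0$ and $\infty$ by constants, so $\|\x_i^\top\Rb^{-1}\|^2$ is a constant-factor estimate of the $i$th leverage score. To read these off in input-sparsity time I would sketch from the right with a Gaussian matrix $\mathbf{G}\in\R^{d\times O(\log n)}$, forming $\X(\Rb^{-1}\mathbf{G})$ in $O(\nnz(\X)\log n)$ and using row norms as estimates $\tilde{\ell}_i$; a union bound over the $n$ rows is what forces the $O(\log n)$ sketch width. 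I would then sample $m=O\!\big(d\log d/\eta^2\big)$ rows i.i.d.\ proportionally to $\tilde{\ell}_i$, rescale, and set $\A=\sum_{\text{sampled}}\frac1{m p_i}\x_i\x_i^\top$. Matrix Chernoff concentration gives the two-sided bound \eqref{eq:cond1} with high probability once $m\gtrsim d\log(d/\delta)/\eta^2$; substituting $\eta=\Theta(\epsilon/(\bar{s}+\log1/\epsilon))$ gives $m=O(d\log d\,(\bar{s}+\log1/\epsilon)^2/\epsilon^2)$, and assembling the $m$ rank-one terms costs $O(md^2)=O(d^3\log d\,(\bar{s}+\log1/\epsilon)^2/\epsilon^2)$, matching the $\poly(d)$ term in the claim.

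For the oracle $\tilde{l}$, I would now treat $\A$ as fixed, compute $(\I+\A)^{-1/2}$ from an eigendecomposition in $O(d^3)$, draw a fresh Gaussian sketch $\mathbf{G}\in\R^{d\times O(\log(n/\delta))}$, and form $\M=(\I+\A)^{-1/2}\mathbf{G}$ in $\poly(d)\log n$ time. Computing $\X\M$ in $O(\nnz(\X)\log n)$ and setting $\tilde{l}_i=\|(\X\M)_i\|^2$ yields, by the Johnson--Lindenstrauss norm bound with a union over the $n$ fixed vectors $(\I+\A)^{-1/2}\x_i$, estimates within any prescribed constant factor---in particular the $[\frac12,\frac32]$ window of \eqref{eq:cond2}---simultaneously for all $i$. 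Finally I would build an alias table over $(\tilde{l}_1,\dots,\tilde{l}_n)$ in $O(n)\le O(\nnz(\X))$ time, giving an $O(1)$-per-draw sampling oracle. Summing the three contributions gives the stated bound $O(\nnz(\X)\log n + d^3\log d\,(\bar{s}+\log1/\epsilon)^2/\epsilon^2)$.

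The main obstacle is not any single primitive---each is standard---but controlling the composition of randomized approximations so the final errors land inside the prescribed windows. In particular, the leverage-score estimates feeding the sampler are themselves approximate, so I must verify that using constant-factor scores only inflates $m$ by a constant while still yielding the \emph{two-sided} relative bound \eqref{eq:cond1} (the upper direction $\A\preceq(1+\eta)\X^\top\X$ is the delicate one under reweighted sampling). A second point requiring care is that both sketching steps must succeed \emph{simultaneously} across all $n$ rows, which is precisely what produces the $\log n$ factors and keeps the whole preprocessing in input-sparsity time; since the ridge-leverage estimates in \eqref{eq:cond2} are taken with respect to the already-realized random $\A$, I would condition on $\A$ and use an independent sketch $\mathbf{G}$ so that the target vectors are fixed when the Johnson--Lindenstrauss guarantee is applied.
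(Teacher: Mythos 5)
Your proposal follows essentially the same three-step route as the paper's proof: constant-factor leverage scores via a sparse subspace embedding plus a Johnson--Lindenstrauss sketch in $O(\nnz(\X)\log n)+\poly(d)$ time, i.i.d.\ row sampling with matrix Chernoff to build $\A$ satisfying \eqref{eq:cond1} at cost $O(d^3\eta^{-2}\log d)$, and JL estimation of the row norms of $\X(\I+\A)^{-1/2}$ to obtain $\tilde{l}$ satisfying \eqref{eq:cond2}. The one detail you omit (which the paper handles with a two-stage bootstrap) is that $\eta=\Theta(\epsilon/(\bar{s}+\log 1/\epsilon))$ depends on $\bar{s}$, which is not known in advance, so one must first run the sampling step with a constant $\eta_0$ and use $\max\{1,\tr(\A(\I+\A)^{-1})\}$ as an estimate of $\bar{s}$ before choosing the final sample size.
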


The proof of Theorem \ref{t:main} is spread out across
Sections \ref{s:correctness}, \ref{s:efficiency} and
the appendices. In
Section \ref{s:r-dpp} and Appendix \ref{a:r-dpp} we define regularized
determinantal point processes and show how they can be used for
sampling DPPs. Then, in 
Section \ref{s:correctness}  and Appendix \ref{a:ineq} we show that
Algorithm \ref{alg:main} indeed samples from $\DPP(\rho\X)$
($\epsilon$-approximation 
is proven in Appendix \ref{a:tv}). Finally, in Section
\ref{s:efficiency} we present the key steps in proving the time
bounds, with the details (including the proof of
Proposition~\ref{p:preprocessing})
given in Appendix \ref{a:time}.

\subsection*{Application: row-based low-rank matrix
  reconstruction}

One application of DPPs aims to find a
small subset $S\subseteq\{1..n\}$ of rows of matrix $\X\in\R^{n\times
  d}$ such that the subspace spanned by those rows captures the full
matrix nearly as well as the best rank-$k$ approximation in terms of
the Frobenius norm $\|\cdot\|_F$. Let
$\P_S=(\X_S)^+\X_S$ be the projection matrix onto the span of vectors
$\{\x_i\}_{i\in S}$. \cite{more-efficient-volume-sampling}
showed that if $S\sim\DPP(\X)$ then for any $k\leq s\leq d$,
\begin{align*}
  \E\Big[\big\|\X - \X\P_S\big\|_F^2\ \big|\ |S|\!=\!s\Big]\leq
  \frac{s+1}{s+1-k}\,\|\X-\X_{(k)}\|_F^2 \quad\text{where}\quad\X_{(k)} = \!\!\!\argmin_{\Y:\,\text{rank}(\Y)=k}\!\!\|\X-\Y\|_F,
\end{align*}
and that for any $s=o(n)$ the bound is tight up to lower order
terms.\footnote{%
Other methods are known for this and related tasks which achieve
near-optimal bounds \cite[e.g., see][]{pca-volume-sampling,near-optimal-columns,optimal-cur}.}
  In particular, for the ratio $\frac{s+1}{s+1-k}$ to become
$1+\epsilon$, we need $s_{k,\epsilon} = k/\epsilon + k-1$ rows sampled
from a DPP. Even though it is most natural to use fixed-size DPPs in
this context, the sample size $|S|$ for a standard DPP is sufficiently
concentrated around its mean to offer near-optimal
guarantees for this task. In fact, as shown by
\cite{dpp-concentration} (see Lemma \ref{l:concent} in Appendix \ref{a:tv}), for
$S\sim\DPP(\X)$, w.p.~$\geq\frac12$ we have $\big|
|S|-\bar{s}\big|\leq c\sqrt{\bar{s}}$ for some absolute constant
$c$, where $\bar{s}=\max\{1,\,\E[|S|]\}$. The
expected sample size is derived as:
\begin{align}
\E\big[|S|\big] =
  \sum_{i=1}^d\frac{\lambda_i}{1+\lambda_i},\quad\text{where}\quad\lambda_1,\dots,\lambda_d\text{
  are the eigenvalues of }\X^\top\X.\label{eq:bars}
\end{align}
Since $\bar{s}$ is monotonic w.r.t.~the eigenvalues we can use a
simple binary search to find a rescaling $\alpha\X$ for
which $\bar{s}-c\sqrt{\bar{s}}\approx s_{k,\epsilon}$. Thus, if
$S\sim\DPP(\alpha\X)$, then for
$\Delta_{k,\epsilon}=[s_{k,\epsilon},s_{k,\epsilon}+\delta_{k,\epsilon}]$,
where
$\delta_{k,\epsilon}=2c\sqrt{\bar{s}}=O\big(\sqrt{s_{k,\epsilon}}\big)$,
we have
\begin{align*}
    \E\Big[\big\|\X - \X\P_S\big\|_F^2\ \big|\ |S|\in\Delta_{k,\epsilon}\Big]
\leq (1+\epsilon)\,
  \|\X-\X_{(k)}\|_F^2\quad\text{and}\quad
  \Pr\!\big(|S|\in\Delta_{k,\epsilon}\big)\geq \frac12. 
\end{align*}
We can obtain the same guarantee if we
replace the eigenvalues of $\X^\top\X$ in \eqref{eq:bars}
with those of matrix $\A$ satisfying condition
\eqref{eq:cond1} with $\eta=\frac1{4s_{k,\epsilon}}$, so by Theorem
\ref{t:main} the total cost of obtaining such a sample would 
be%
\footnote{If we forgo exact DPP sampling, then 
  \textit{projection-cost preserving sketches}
  \citep{projection-cost-preserving} may offer further speed-ups.}
$O(\nnz(\X)\log n + d^3\log d\cdot s_{k,\epsilon}^2)$. Note that we
needed the fact that Algorithm \ref{alg:main} returns $\DPP(\rho\X)$
rather than just an approximation of $\DPP(\X)$. This raises the
following natural question: can our techniques be extended to 
sampling from fixed-size DPPs, so that the optimal sample size $s_{k,\epsilon}$ can
be achieved exactly and in time $O(\nnz(\X)\log n + \poly(d))$? We leave
this as a new direction for future work. 

\section{Regularized determinantal point processes (R-DPP)}
\label{s:r-dpp}
We propose a new family of determinantal sampling distributions which
will be used in Sections \ref{s:correctness} and \ref{s:efficiency} to
prove Theorem \ref{t:main}. The crucial property of this
family is that the determinantal sampling probabilities can be
regularized by adding an arbitrary fixed positive semi-definite
(p.s.d.) matrix inside of the 
determinant, while maintaining many of the natural properties of a
DPP, such as a simple normalization constant. 
This is achieved by controlling the size of the sample with a
Poisson random variable. In the proofs that follow we will use the shorthand
$[n]\defeq \{1..n\}$.

\begin{definition}
Given matrix $\X\in\R^{n\times d}$, distribution $p=(p_1,\dots,p_n)$,
p.s.d.~matrix $\A\in\R^{d\times d}$ and $r>0$, we define
$\Vol_p^r(\X,\A)$ as a distribution over all index sequences
$\tilde{\sigma}\in\bigcup_{k=0}^\infty\{1..n\}^k$, s.t.
  \begin{align}
  \Pr(\sigmat) = \frac{\det(\A +
  \X_{\sigmat}^\top\X_{\sigmat})}{\det\!\big(\A +
  r\,\E_{j\sim p}[\x_{j}\x_{j}^\top]\big)}\
    \frac{r^{k}\text{e}^{-r}}{k!}\,\prod_{i=1}^{k}p_{\sigmat_i},
    \qquad\text{for}\quad\sigmat\in\{1..n\}^k.\label{eq:poisson-prob}
\end{align}
\end{definition}
Whenever $p$ is uniform, we will write $\Vol^r(\X,\A)$.
Of course, we need to establish that this is in fact a valid
distribution, i.e.~that it sums to one. We achieve this by showing a
new  variant of the classical Cauchy-Binet formula (the classical
formula is stated in \eqref{eq:classic-cb} below). 
\begin{lemma}\label{l:cb}
Given $\X\!\in\!\R^{n\times d}$ and p.s.d.~$\A\!\in\!\R^{d\times d}$, if
$\sigma=(\sigma_1,\dots,\sigma_K)\overset{\textnormal{\fontsize{6}{6}\selectfont
    i.i.d.}}{\sim}p=(p_1,\dots,p_n)$, then
\begin{align*}
  \text{for}\quad K\sim\textnormal{Poisson}(r),\qquad
  \E\Big[\det\!\big(\A +\X_\sigma^\top\X_\sigma\big)\Big]
  = \det\!\Big(\A + \E\big[\X_{\sigma}^\top\X_{\sigma}\big]\Big).
\end{align*}
\end{lemma}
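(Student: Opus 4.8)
The plan is to prove the identity by conditioning on the random length $K$, expanding the determinant by multilinearity into a sum over subsets of $[K]$, taking the i.i.d.\ expectation of each subset term, and finally averaging over $K$ using a special property of the Poisson distribution. Throughout I would write $\v_i=\x_{\sigma_i}$, so that $\X_\sigma^\top\X_\sigma=\sum_{i=1}^K\v_i\v_i^\top$, and set $\M=\E_{j\sim p}[\x_j\x_j^\top]=\sum_{j=1}^n p_j\,\x_j\x_j^\top$, so that $\E[\X_\sigma^\top\X_\sigma]=r\M$ and the target right-hand side is $\det(\A+r\M)$.

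First I would record the elementary fact that for any fixed square matrix $\M'$ and vector $\v$, the matrix determinant lemma gives $\det(\M'+t\,\v\v^\top)=\det(\M')+t\,\v^\top\adj(\M')\v$, which is affine-linear in $t$. Applying this one rank-one term at a time shows that $g(t_1,\dots,t_m)=\det\big(\A+\textstyle\sum_{i=1}^m t_i\,\v_i\v_i^\top\big)$ is affine-multilinear, i.e.\ of degree at most one in each $t_i$. Hence it expands as $g(t_1,\dots,t_m)=\sum_{T\subseteq[m]}c_T\prod_{i\in T}t_i$, where the coefficient $c_T$ depends only on $\A$ and $\{\v_i\}_{i\in T}$ and is a separately linear (mixed) function of the rank-one matrices $\{\v_i\v_i^\top\}_{i\in T}$. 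Setting all $t_i=1$ and specialising to $m=K$ yields the subset expansion $\det(\A+\sum_{i=1}^K\v_i\v_i^\top)=\sum_{T\subseteq[K]}c_T$.

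Next I would take the expectation over the i.i.d.\ draws $\v_i$. Because $c_T$ is separately linear in each $\v_i\v_i^\top$ and the $\v_i$ are independent, the expectation passes inside and replaces each $\v_i\v_i^\top$ by $\E[\v_i\v_i^\top]=\M$; thus $\E[c_T]$ depends only on $j:=|T|$, call it $\gamma_j$, and $\gamma_j$ equals the coefficient of $\prod_{i\in T}t_i$ in $\det(\A+(\sum_{i\in T}t_i)\M)$. Writing $\det(\A+s\M)=\sum_{l=0}^d e_l\,s^l$ and extracting the squarefree degree-$j$ monomial (which receives a contribution only from the $e_j s^j$ term, with multinomial weight $j!$) gives the clean identification $\gamma_j=j!\,e_j$. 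Conditioning on $K$ and counting the $\binom{K}{j}$ subsets of each size then produces $\E\big[\det(\A+\sum_{i=1}^K\v_i\v_i^\top)\mid K\big]=\sum_{j\ge0}\binom{K}{j}j!\,e_j=\sum_j(K)_j\,e_j$, where $(K)_j=K(K-1)\cdots(K-j+1)$ is the falling factorial.

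The final, conceptually central, step is to average over $K\sim\mathrm{Poisson}(r)$ using the defining property that the $j$-th factorial moment of a Poisson is exactly $\E[(K)_j]=r^j$. This gives $\E[\det(\A+\X_\sigma^\top\X_\sigma)]=\sum_{j=0}^d r^j e_j=\det(\A+r\M)=\det(\A+\E[\X_\sigma^\top\X_\sigma])$, as claimed. I expect the main obstacle to be the bookkeeping in the middle step, namely justifying that the mixed multilinear coefficient $c_T$ factors through $\E[\v_i\v_i^\top]$ by independence and correctly identifying $\gamma_j=j!\,e_j$; the elegant part is the exact cancellation $\binom{K}{j}j!=(K)_j$ against the Poisson factorial moment $r^j$, which is precisely why randomizing the length by a Poisson (rather than fixing it) is what makes the regularized sum close up into a single determinant.
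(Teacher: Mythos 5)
Your proof is correct, and it takes a genuinely different route from the paper's. The paper factors $\A=\B^\top\B$ and applies the classical Cauchy--Binet formula to the matrix obtained by stacking $\B$ on top of $\X_\sigma$, yielding a double sum over subsets $S$ of rows of $\B$ and subsets $T$ of the sample; it then uses exchangeability of $\sigma$ to reduce to one representative $T$ per size, collapses the resulting Poisson series, and invokes Cauchy--Binet a second time to reassemble the single determinant. You instead expand $\det(\A+\sum_i t_i\v_i\v_i^\top)$ multilinearly via the matrix determinant lemma, push the expectation through each mixed coefficient $c_T$ by independence --- replacing every $\v_i\v_i^\top$ by $\M$, which does require the (standard but worth stating) observation that the coefficient of the squarefree monomial extends to a form that is multilinear in \emph{arbitrary} matrices, not just rank-one ones, so that it commutes with expectation --- then identify $\gamma_j=j!\,e_j$ from the expansion $\det(\A+s\M)=\sum_l e_l s^l$ and finish with the Poisson factorial moments $\E[(K)_j]=r^j$. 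The two arguments share the same engine: your identity $\sum_k\frac{r^k\mathrm{e}^{-r}}{k!}\binom{k}{j}j!=r^j$ is exactly the series the paper collapses, and your $c_T$ is precisely the paper's inner sum over $S$ of the squared stacked determinants, aggregated. What your version buys is that it never needs to factor $\A$ or invoke Cauchy--Binet at all, and it isolates the Poisson mechanism cleanly as the factorial-moment identity; what the paper's version buys is that every intermediate quantity is an explicit finite sum of squared determinants, with no appeal to polarization of multilinear forms.
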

\begin{remark}
The classical Cauchy-Binet formula states that for a matrix
  $\X\in\R^{n\times d}$, we have
  \begin{align}
    \sum_{S\subseteq[n]:\,|S|=d}\det(\X_S)^2 = \det(\X^\top\X).\label{eq:classic-cb}
  \end{align}
Probabilistic extensions of the formula previously appeared in the context of
volume sampling \citep{correcting-bias},
and also much earlier in a different context
\citep{expected-generalized-variance}. In these cases $\A=\zero$ and $K$ is fixed.
\end{remark}
\begin{proof}
Previously shown identities for fixed size $K$ do not generalize
naturally to the regularized setting unless randomness in $K$ is
introduced. We start the proof by applying the Cauchy-Binet formula
\eqref{eq:classic-cb} to the term under the expectation. Let
$\A=\B^\top\B$ be any decomposition of $\A$ s.t.~$\B\in\R^{b\times d}$
where $b=\mathrm{rank}(\A)$. To apply the Cauchy-Binet formula, we sum over all $d$-element subsets of
the union of rows of matrices $\B$ and $\X_{\sigma}$:
\begin{align*}
  \det\!\big(\A + \X_\sigma^\top\X_\sigma\big) =
  \sum_{\underset{|S|\geq d-K}{S\subseteq[b]}}\,\sum_{\underset
  {|T|=d-|S|}{T\subseteq[K]:}}\det\!\bigg(\Big[\substack{\!\B_S\\ ~\X_{\sigma_T}}\Big]\bigg)^{\!2}.
\end{align*}
Applying the law of total expectation w.r.t.~the Poisson variable $K$,
we obtain
\begin{align*}
  \E\Big[\det\!\big(\A+\X_\sigma^\top\X_\sigma\big)\Big]
  &=\sum_{k=0}^\infty \frac{r^k\text{e}^{-r}}{k!}
\sum_{\underset{|S|\geq d-k}{S\subseteq[b]}}\,\sum_{\underset {|T|=d-|S|}{T\subseteq[k]:}}
    \E\bigg[
\det\!\bigg(\Big[\substack{\!\B_S\\ ~\X_{\sigma_T}}\Big]\bigg)^{\!2}
    \,\big|\,K\!=\!k\bigg]
\\
  &\overset{(a)}{=}\sum_{S\subseteq[b]}\,\sum_{k=d-|S|}^\infty
    \frac{r^k\text{e}^{-r}}{k!}\,{k\choose
    d\!-\!|S|}\E\bigg[
\det\!\bigg(\Big[\substack{\,\B_S\\ \,\X_{\sigma}}\Big]\bigg)^{\!2}
    \,\big|\,K\!=\!d\!-\!|S|\bigg]\\
  &\overset{(b)}{=}\sum_{S\subseteq[b]}(d\!-\!|S|)!\!\!\sum_{\underset {|T|=d-|S|}{T\subseteq[n]:}}
    \det\!\bigg(\Big[\substack{\,\B_S\\ \,\X_{T}}\Big]\bigg)^{\!2}
    \bigg(\prod_{i\in
    T}p_i\bigg)
    \sum_{k=d-|S|}^\infty \frac{r^k\text{e}^{-r}}{k!}\,{k\choose
    d\!-\!|S|}\\
  &\overset{(c)}{=}\sum_{S\subseteq[b]}\sum_{\underset
    {|T|=d-|S|}{T\subseteq[n]:}}
    \det\!\Bigg(\!\begin{bmatrix}
      \!\!\B_S\\
      \,\big[\sqrt{r p_i}\,\x_i^\top\big]_{i\in {T}}
    \end{bmatrix}\!\Bigg)^{\!2}
    \underbrace{\sum_{k=d-|S|}^\infty
    \frac{r^{k-d+|S|}\text{e}^{-r}}{(k\!-\!d\!+\!|S|)!}}_{1}\\[-3mm]
  &\overset{(d)}{=}\det\!\Big(\B^\top\B + r\sum_{i=1}^np_i\x_i\x_i^\top\Big),
\end{align*}
where $(a)$ follows from the exchangeability of sequence $\sigma$ (so
that the value of the expectation is the same for any subset $T$),
in $(b)$ we expand the expectation and note that only unique sequences
$\sigma$ will have a non-zero determinant (hence the switch to
subsets and the term $(d-|S|)!$), in $(c)$ we absorb the factors
$r^{d-|S|}$ and $p_i$ into the determinant, and finally $(d)$ is the
classical Cauchy-Binet. It was crucial that we were able to
absorb the subset size $|S|$ into the Poisson series, which allowed
the formula to collapse to a single determinant. This completes the
proof because $\E[\X_\sigma^\top\X_\sigma] =
\E[K]\,\E[\x_{\sigma_1}\x_{\sigma_1}^\top] = r \sum_i p_i\x_i\x_i^\top$.
\end{proof}
In what sense is R-DPP a natural extension of a determinantal point
process? Naively, we might say that setting the matrix $\A$ to an
all-zeros matrix would recover the classical distribution, however
this is not the case because when $\A\!=\!\zero$ only samples of size $d$ or larger will
have non-zero probability.
Instead, we can demonstrate a connection to both DPP and volume
sampling distributions in a different way: we show
that they can be obtained as the limiting distributions of R-DPP when
the regularization and sample size parameter $r$ 
converge to zero (in two separate ways), which is remarkable as the two
distributions in most cases produce vastly different samples.
\begin{theorem}\label{t:limit}
For $\X\in \R^{n\times d}$, ignoring the ordering in the sequences sampled with
$\Vol$, we have
\begin{align*}
  \Vol^{r}\!\Big(\,\X,\ \frac{r}{n}\,\I\,\Big)
  \ &\overset{r\rightarrow 0}{\longrightarrow}\ 
      \DPP(\X)\quad\textnormal{(pointwise)},\\
  \text{whereas}\qquad\Vol^{r}(\X,\zero)
  \ &\overset{r\rightarrow 0}{\longrightarrow}\
      \VS(\X)\quad\textnormal{(pointwise)}.
\end{align*}
\end{theorem}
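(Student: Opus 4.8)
The plan is to establish each pointwise limit by fixing a target outcome and computing the limit of its $\Vol^r$-probability as $r\to0$. I read ``ignoring the ordering'' as passing from a sequence $\sigmat$ to the multiset of its entries; then for a \emph{proper} set $S$ with $|S|=k$ the sequences mapping to it are exactly its $k!$ orderings, a \emph{finite} collection, and by exchangeability of the i.i.d.\ draws these all carry equal probability, so the induced set-probability is simply $k!$ times that of a single ordering. The first thing I would record is the normalization constant in closed form: by Lemma~\ref{l:cb} the denominator in \eqref{eq:poisson-prob} with uniform $p$ equals $\det(\A+\frac{r}{n}\X^\top\X)$, which for $\A=\frac{r}{n}\I$ is $(\frac{r}{n})^d\det(\I+\X^\top\X)$ and for $\A=\zero$ is $(\frac{r}{n})^d\det(\X^\top\X)$.

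The analytic heart of both limits is the small-parameter behaviour of a regularized determinant. Writing $\M=\X_S^\top\X_S$ and $m=\rank(\M)$, the expansion $\det(\epsilon\I+\M)=\sum_{j=0}^d\epsilon^{d-j}e_j$ in the eigenvalues of $\M$ yields
\begin{align*}
  \det\!\Big(\tfrac{r}{n}\I+\X_S^\top\X_S\Big)\ =\ \Big(\tfrac{r}{n}\Big)^{d-m}\,\pdet(\X_S^\top\X_S)\,\big(1+O(r)\big),
\end{align*}
where $\pdet$ is the product of the nonzero eigenvalues; and when the rows $\{\x_i\}_{i\in S}$ are linearly independent ($m=k\le d$) this pseudo-determinant equals $\det(\X_S\X_S^\top)$, since the nonzero spectra of $\X_S^\top\X_S$ and of the Gram matrix $\X_S\X_S^\top$ coincide. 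For the first limit I would substitute this into \eqref{eq:poisson-prob}: a single ordering of $S$ has probability $\sim\frac{(r/n)^{d-k}\det(\X_S\X_S^\top)}{(r/n)^d\det(\I+\X^\top\X)}\cdot\frac{r^k e^{-r}}{k!\,n^k}$, in which the powers of $r$ and of $n$ cancel \emph{exactly}, leaving $\frac{\det(\X_S\X_S^\top)}{k!\,\det(\I+\X^\top\X)}(1+O(r))$. Summing the $k!$ orderings and using $\det(\I+\X^\top\X)=\det(\I+\X\X^\top)$ recovers $\det(\X_S\X_S^\top)/\det(\I+\X\X^\top)=\Pr_{\DPP(\X)}(S)$.

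Next I would show the remaining configurations vanish, which is the same power-counting for both limits. For any sequence whose distinct entries span a subspace of dimension $m'$ but whose length is $k'$, the regularized numerator scales like $(r/n)^{d-m'}$ while the Poisson/i.i.d.\ factor contributes $r^{k'}$, so after dividing by the normalizer $(r/n)^d$ the net power is $r^{k'-m'}$; this stays bounded away from $0$ only when $k'=m'$, i.e.\ when the sequence has exactly $m'$ distinct, linearly independent entries. Hence repeated entries (giving $k'>m'$) and linearly dependent or oversized supports die as $r\to0$, matching the fact that $\DPP(\X)$ charges only independent sets of size $\le d$. For the second limit the argument is cleaner: $\A=\zero$ forces $\det(\X_{\sigmat}^\top\X_{\sigmat})=0$ unless the rows span $\R^d$, and matching $r^{k'}$ against $(r/n)^d$ gives net power $r^{k'-d}$, so only sequences of exactly $d$ distinct spanning rows survive; summing their $d!$ orderings (with $e^{-r}\to1$) yields $\det(\X_S)^2/\det(\X^\top\X)=\Pr_{\VS(\X)}(S)$ for $|S|=d$.

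The step I expect to require the most care is verifying the clean cancellation of the $r$- and $n$-powers together with the pseudo-determinant identity $\pdet(\X_S^\top\X_S)=\det(\X_S\X_S^\top)$; both are routine once the determinant expansion is in place, but they are where an off-by-one in the exponent would silently break the limit. A secondary subtlety arises if one instead identifies sequences by their set of \emph{distinct} values rather than by their multiset: then each proper set receives contributions from infinitely many repeat-padded sequences, and I would justify the term-by-term limit by bounding the tail of that series using $k'-m'\ge1$ (so each such term is $O(r)$) together with the factorial decay of the Poisson weights, making the entire tail $O(r)$. Under the multiset reading this issue disappears, since the relevant sum is finite, which is why I would present that reading first.
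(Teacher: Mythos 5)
Your proof is correct and follows essentially the same route as the paper: fix a target set, sum its $k!$ orderings, and compute the pointwise limit of \eqref{eq:poisson-prob} directly, with the non-generic configurations killed by power counting in $r$. The only difference is technical: where you extract the factor $(\tfrac{r}{n})^{d-k}$ via an asymptotic eigenvalue expansion plus the identity $\pdet(\X_S^\top\X_S)=\det(\X_S\X_S^\top)$, the paper uses the exact Sylvester identity $\det(\tfrac{r}{n}\I+\X_S^\top\X_S)=(\tfrac{r}{n})^{d-k}\det(\tfrac{r}{n}\I+\X_S\X_S^\top)$, which makes the cancellation exact and dispenses with the $O(r)$ bookkeeping.
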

Theorem \ref{t:limit} suggests that R-DPPs are likely to
be of independent interest as an extension of DPPs. However, it does not
say how to use them algorithmically. To that end, we make a second
observation, which essentially states that DPPs are preserved under
subsampling with R-DPPs. See Appendix \ref{a:r-dpp} for proofs of
Theorems \ref{t:limit} and \ref{t:composition}.
\begin{theorem}\label{t:composition}
For any $\X\in\R^{n\times d}$, $\alpha>0$ and distribution $p$ over
$\{1..n\}$ s.t.~$p_i>0$, let $\Xt$ denote matrix $\X$ with $i$th row rescaled
by $\frac1{\sqrt{\alpha p_i}}$ for every $i\in\{1..n\}$. It follows that for
any $r>0$,
\begin{align*}
  \text{if}\quad\sigmat\sim\Vol_p^r\big(\Xt,\I\big)\ \ \text{and}\ \ 
  S\sim\DPP\big(\Xt_{\sigmat}\big),\quad\text{then}\quad
  \sigmat_S\sim\DPP\Big(\text{\scriptsize$\sqrt{\frac{r}{\alpha}}$}\,\X\Big).
\end{align*}
\end{theorem}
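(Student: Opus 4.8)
The plan is to compute the joint law of the intermediate sequence $\sigmat$ and the downsampled subset $S$, exploit a cancellation of determinants, and then marginalize out everything except the returned set $\sigmat_S$. First I would write $\Pr(\sigmat,S)=\Pr(\sigmat)\,\Pr(S\mid\sigmat)$, where $\Pr(\sigmat)$ is the $\Vol_p^r(\Xt,\I)$ density from \eqref{eq:poisson-prob} and $\Pr(S\mid\sigmat)$ is the $\DPP(\Xt_{\sigmat})$ probability from \eqref{eq:dpp} applied to the reduced matrix $\Xt_{\sigmat}$. The key observation is that the DPP normalizer $\det(\I+\Xt_{\sigmat}^\top\Xt_{\sigmat})$ cancels exactly against the identical numerator factor in $\Pr(\sigmat)$, so that $\Pr(\sigmat,S)=Z^{-1}\,\det(\Xt_{\sigmat_S}\Xt_{\sigmat_S}^\top)\,\tfrac{r^K\mathrm{e}^{-r}}{K!}\prod_{i=1}^K p_{\sigmat_i}$, where $K$ is the length of $\sigmat$ and $Z=\det(\I+r\,\E_{j\sim p}[\xbt_j\xbt_j^\top])=\det(\I+\tfrac r\alpha\X^\top\X)$ is the R-DPP normalizer supplied by Lemma \ref{l:cb}. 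Crucially this expression depends on $\sigmat$ only through the selected subsequence $\sigmat_S$ and the length $K$.

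Next I would fix a target set $U\subseteq\{1..n\}$ with $|U|=u$ and sum $\Pr(\sigmat,S)$ over all $(\sigmat,S)$ for which $\sigmat_S=U$. A repeated selected index forces a repeated row in $\Xt_{\sigmat_S}$ and hence a vanishing determinant, so only sequences with distinct selected entries contribute and $\sigmat_S$ is effectively a set of size $u$. The remaining count factors cleanly: there are $\tfrac{K!}{(K-u)!}$ ordered ways to place the $u$ elements of $U$ into positions of $\{1..K\}$, each carrying sampling weight $\prod_{j\in U}p_j$, while the $K-u$ unselected positions range freely over $\{1..n\}$ and contribute $\big(\sum_\ell p_\ell\big)^{K-u}=1$ since $p$ is a distribution. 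The length is then summed out through the Poisson weights, and the decisive simplification is $\sum_{K\ge u}\tfrac{r^K\mathrm{e}^{-r}}{(K-u)!}=r^u$, leaving $\Pr(\sigmat_S=U)=Z^{-1}\,r^u\,\det(\Xt_U\Xt_U^\top)\prod_{j\in U}p_j$.

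Finally I would undo the row rescaling. Writing $\Xt_U=\D_U\X_U$ with $\D_U=\diag\big(\tfrac1{\sqrt{\alpha p_j}}\big)_{j\in U}$ gives $\det(\Xt_U\Xt_U^\top)=\big(\prod_{j\in U}\tfrac1{\alpha p_j}\big)\det(\X_U\X_U^\top)$; the factor $\prod_{j\in U}\tfrac1{p_j}$ cancels the sampling weight and the powers of $r$ and $\alpha^{-1}$ combine, so the numerator becomes $(\tfrac r\alpha)^u\det(\X_U\X_U^\top)=\det\big((\sqrt{r/\alpha}\,\X)_U(\sqrt{r/\alpha}\,\X)_U^\top\big)$. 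Since $Z=\det(\I+\tfrac r\alpha\X^\top\X)=\det\big(\I+(\sqrt{r/\alpha}\,\X)(\sqrt{r/\alpha}\,\X)^\top\big)$ by Sylvester's identity, the ratio matches exactly the $\DPP(\sqrt{r/\alpha}\,\X)$ probability of $U$, which establishes the claim. The step I expect to demand the most care is the marginalization in the second paragraph: separating selected from unselected positions, checking that the unselected ones contribute a factor of $1$, and verifying that the Poisson weighting is precisely what makes the $K$-sum telescope to $r^u$. This is where randomizing the sequence length is essential, exactly as in Lemma \ref{l:cb}: with a fixed length the sum would not collapse to a single determinant.
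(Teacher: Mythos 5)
Your proposal is correct and follows essentially the same route as the paper's proof: marginalize the joint law of $(\sigmat,S)$ over the event $\sigmat_S=T$, cancel $\det(\I+\Xt_{\sigmat}^\top\Xt_{\sigmat})$ against the DPP normalizer via Sylvester's identity, count the $\binom{K}{u}u!=\frac{K!}{(K-u)!}$ placements by exchangeability, collapse the Poisson sum to $r^u$, and undo the $\frac{1}{\sqrt{\alpha p_i}}$ rescaling. The only cosmetic difference is that you spell out a few steps the paper leaves implicit (the unselected positions contributing $\big(\sum_\ell p_\ell\big)^{K-u}=1$, and the vanishing of determinants for repeated selected indices).
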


\section{Correctness of Algorithm \ref{alg:main}}\label{s:correctness}
We present the first part of the proof of Theorem
\ref{t:main} by establishing that Algorithm \ref{alg:main} produces a
sample from a determinantal point process. In fact, we will prove the
following more precise claim:
\begin{lemma}\label{l:correctness}
  Given $\X\in\R^{n\times d}$ and a non-zero p.s.d.~matrix
  $\A\in\R^{d\times d}$, Algorithm \ref{alg:main} returns
  $S\sim\DPP(\rho\X)$, with $\rho^2\!=\!\frac{\st}{\hat{s}}$ where $\st\!
  =\!\tr(\A(\I+\A)^{-1})$ and $\hat{s}\!=\!\sum_{i=1}^n\x_i^\top(\I+\A)^{-1}\x_i$.
\end{lemma}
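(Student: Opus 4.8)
The plan is to follow the two-phase structure built into the algorithm. First I would show that the rejection loop on lines~\ref{line:rep1}--\ref{line:rep2} returns a sequence $\sigma$ distributed \emph{exactly} as a regularized DPP $\Vol_p^r(\Xt,\I)$, and then I would feed this into the composition result of Theorem~\ref{t:composition} to convert the R-DPP, downsampled by $\DPP(\Xt_\sigma)$ on line~\ref{line:sub}, into a plain $\DPP(\rho\X)$. The parameters are read directly off the algorithm: the i.i.d.\ proposal is $p_i=l_i/\hat s$ with $l_i=\x_i^\top(\I+\A)^{-1}\x_i$ and $\hat s=\sum_i l_i$; the Poisson mean of the R-DPP is $r=q-\st$; and the reweighted rows $\tilde\x_i=\sqrt{\st/(l_i r)}\,\x_i$ coincide with the rescaling $\tfrac1{\sqrt{\alpha p_i}}\x_i$ demanded by Theorem~\ref{t:composition} for the choice $\alpha=\hat s\,r/\st$.

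For the first phase I would write down the joint law of $(K,\sigma,\textit{Acc})$ produced by lines~\ref{line:poisson}--\ref{line:acc}:
\[
\Pr(K\!=\!k,\sigma,\textit{Acc})=\frac{q^k\mathrm{e}^{-q}}{k!}\Big(\prod_{i=1}^k p_{\sigma_i}\Big)\,\frac{\det(\I+\Xt_\sigma^\top\Xt_\sigma)}{C_k\,\det(\I+\A)}.
\]
Substituting $C_k=(q/(q-\st))^{k+d}\mathrm{e}^{-\st}$ and $r=q-\st$ makes the Poisson factors collapse, since $q^k/(q/r)^{k+d}=r^k(r/q)^d$ and $\mathrm{e}^{-q}/\mathrm{e}^{-\st}=\mathrm{e}^{-r}$, giving
\[
\Pr(K\!=\!k,\sigma,\textit{Acc})=\frac{(r/q)^d}{\det(\I+\A)}\cdot\frac{r^k\mathrm{e}^{-r}}{k!}\Big(\prod_{i=1}^k p_{\sigma_i}\Big)\det(\I+\Xt_\sigma^\top\Xt_\sigma).
\]
The prefactor no longer depends on $(k,\sigma)$, so conditioning on $\textit{Acc}=\text{true}$ and renormalizing leaves precisely the mass function \eqref{eq:poisson-prob} instantiated at $(\Xt,\I,r,p)$; that this is a genuine probability distribution (in particular that its normalizer $\det(\I+\tfrac{\st}{\hat s}\X^\top\X)$ is finite) is exactly the content of the Poisson Cauchy--Binet identity, Lemma~\ref{l:cb}, applied with regularizer $\I$ and $\E_{j\sim p}[\tilde\x_j\tilde\x_j^\top]=\tfrac{\st}{\hat s\,r}\X^\top\X$. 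The inner rejection that converts the oracle's $\tilde l$-draws into exact $p$-draws is standard and, under \eqref{eq:cond2}, exact, so it affects only the runtime and can be handled separately.

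The second phase is then immediate: with $\sigma\sim\Vol_p^r(\Xt,\I)$ and $\St\sim\DPP(\Xt_\sigma)$, Theorem~\ref{t:composition} with the identified $\alpha$ and $p$ yields $\sigma_{\St}\sim\DPP(\sqrt{r/\alpha}\,\X)$, and since $\sqrt{r/\alpha}=\sqrt{\st/\hat s}=\rho$ we obtain $\rho^2=\st/\hat s$, as claimed.

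I expect the main obstacle to be verifying that the Bernoulli parameter on line~\ref{line:acc} never exceeds one, i.e.\ the determinantal inequality $\det(\I+\Xt_\sigma^\top\Xt_\sigma)\le C_k\det(\I+\A)$ for every length-$k$ sequence; without it the whole rejection scheme is ill-defined, so it must be proved (I would isolate it as a claim deferred to Appendix~\ref{a:ineq}). The route I would take is to factor $\det(\I+\Xt_\sigma^\top\Xt_\sigma)=\det(\I+\A)\det\!\big((\I+\A)^{-1}+\tfrac{\st}{r}\sum_{j=1}^k\u_j\u_j^\top\big)$, where $\u_j$ is the unit vector along $(\I+\A)^{-1/2}\x_{\sigma_j}$, bound the second determinant by the arithmetic--geometric mean inequality through its trace $d-\st+\st k/r$, and check the remaining scalar inequality against $(q/r)^{k+d}\mathrm{e}^{-\st}$.
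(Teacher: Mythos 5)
Your proposal is correct and follows essentially the same route as the paper: establish that conditioning on acceptance turns the loop's output into $\Vol_{l/\hat s}^{\,r}(\Xt,\I)$ with $r=q-\st$ (the Poisson factors collapsing against $C_K$ exactly as you compute), then invoke Theorem~\ref{t:composition} with $\alpha=\hat s\,r/\st$ to get $\rho^2=\st/\hat s$. Your sketch of the validity of the Bernoulli parameter --- reducing $\det(\I+\Xt_\sigma^\top\Xt_\sigma)/\det(\I+\A)$ via AM--GM to a scalar inequality in $k$ through the trace $d-\st+\st k/r$ --- is also the paper's argument, with the remaining scalar bound being precisely Lemma~\ref{l:ineq}.
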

\begin{proof}
The general idea of the proof is to show that the main repeat loop is
implementing an R-DPP, so that we can invoke Theorem
\ref{t:composition}. 
Central to this fact is the choice of numerical factor appearing
in the denominator of Bernoulli sampling probability in line
\ref{line:acc}, which we denote here as 
$C_K=(\frac q {q-\st})^{K+d}\text{e}^{-\st}$. This factor has to depend
on $K$ because otherwise the determinantal term will always dominate it for
large enough $K$. This means that $C_K$ needed to be carefully chosen so that:
\begin{enumerate}
\item the acceptance probability of line \ref{line:acc} is always bounded by 1,
\item we have control over how the presence of $C_K$ changes the distribution of $K$,
\item and $C_K$ is not too large so that we may have a good chance of
  accepting the sample.
\end{enumerate}
We start by showing that the Bernoulli sampling probability in line
\ref{line:acc} is in fact  bounded by 1. We will use the following
simple inequality (proven in Appendix \ref{a:ineq}):
\begin{lemma}\label{l:ineq}
  For any $d\geq 1$, $\epsilon\in[0,1]$, and non-negative integers
  $k,q$ s.t. $q\geq \epsilon d$ we have
  \begin{align*}
    \bigg(\big(1-\epsilon\big) + \frac{\epsilon\, k}{q}\bigg)^d\leq \Big(\frac{q}{q-\epsilon d}\Big)^k\text{e}^{-\epsilon d}.
  \end{align*}
\end{lemma}
Let $\Xt\in\R^{n\times d}$ be the matrix $\X$ where for each
$i\in\{1..n\}$ the $i$th row is rescaled
by
$\sqrt{\white{|}\!\!\!\smash{\text{\fontsize{10}{10}\selectfont$\frac\st{l_i(q-\st)}$}}}$,
with $l_i=\x_i^\top(\I+\A)^{-1}\x_i$ (same as in line \ref{line:acc}
of the algorithm), and let $\sigma=(\sigma_1,\dots,\sigma_K)$.
We use arithmetic-geometric mean
inequality for the eigenvalues of matrix $(\I+\Xt_{\sigma}^\top\Xt_{\sigma})(\I+\A)^{-1}$ and the
fact that $\tr\big((\I+\A)^{-1}\big) = d-\st$, obtaining:
\begin{align*}
  \frac{\det (\I+\Xt_{\sigma}^\top\Xt_{\sigma})}{\det(\I+\A)}
  &= \det\!\big((\I+\Xt_{\sigma}^\top\Xt_{\sigma})(\I+\A)^{-1}\big) \\
  &\leq 
    \bigg(\frac1d\tr\big((\I+\Xt_{\sigma}^\top\Xt_{\sigma})(\I+\A)^{-1}\big)\bigg)^d
  =\bigg(\frac{d\!-\!\st}{d} +
    \frac1{d}\tr\big(\Xt_\sigma^\top\Xt_\sigma(\I+\A)^{-1}\big)\bigg)^d\\
  &=
    \bigg(1\!-\!\frac{\st}{d} +
\frac\st{d(q\!-\!\st)}\sum_{i=1}^K \overbrace{\frac1{l_{\sigma_i}} \x_{\sigma_i}^\top(\I+\A)^{-1}\x_{\sigma_i}\!\!}^{1}\bigg)^d\\
&     = \bigg(1\!-\!\frac{\st}{d}+\frac{\st}{d}\,\frac{K}{q-\st}\bigg)^d
 \leq
     \bigg(1\!-\!\frac{\st}{d}+\frac{\st}{d}\,\frac{K}{q}\bigg)^d\bigg(\frac{q}{q-\st}\bigg)^d\\
&\overset{(*)}{\leq} \bigg(\frac q{q-\st}\bigg)^{K+d} \!\!\text{e}^{-\st}= C_K,
\end{align*}
where $(*)$ follows from Lemma \ref{l:ineq} invoked with
$\epsilon=\frac{\st}{d}$ and $k=K$. Having established the
validity of the rejection sampling in Algorithm \ref{alg:main}, we now
compute the distribution of sample $\sigma$ at the point of exiting
the \textbf{repeat} loop. Denoting $r=q-\st$ as the desired Poisson
mean parameter and $\hat{s} =\sum_il_i= \tr(\X^\top\X(\I+\A)^{-1})$ as
the normalization for the sampling probabilities in line \ref{line:iid},
\begin{align*}
  \Pr(\sigma\, |\, \textit{Acc})\ &\propto
  \overbrace{\frac{\det(\I+\Xt_{\sigma}^\top\Xt_{\sigma})}{(\frac{q}{r})^{K+d}\text{e}^{-\st}\det(\I+\A)}}^{\Pr(\textit{Acc}\,|\,\sigma)}\overbrace{\frac{q^K\text{e}^{-q}}{K!}}^{\Pr(K)}\overbrace{\prod_i\frac{l_{\sigma_i}}{\hat{s}}}^{\Pr(\sigma\,|\,K)}
    \propto\ \det\!\Big(\I+\Xt_\sigma^\top\Xt_\sigma\Big) \frac{r^K\text{e}^{-r}}{K!}\prod_{i=1}^K\frac{l_{\sigma_i}}{\hat{s}},
\end{align*}
where in the above we omitted the normalization for the sake of clarity. Comparing the
obtained unnormalized probability to the one given in
\eqref{eq:poisson-prob}, we conclude that the sample is distributed
according to
$\Vol_l^r\big(\Xt,\,\I\big)$. Note how the
factor $C_K$ interplays with $\Pr(K)$ to ``transform'' the variable from being
$\text{Poisson}(q)$ to $\text{Poisson}(r)$. Invoking
Theorem \ref{t:composition} for the matrix
$\X$, $\alpha=\frac{\hat{s}}{\st}r$ and distribution $\big(\frac{l_1}{\hat{s}},\dots, \frac{l_n}{\hat{s}}\big)$ we conclude that Algorithm \ref{alg:main} returns
$\sigma_{\St}\sim \DPP(\rho \X)$ where $\rho^2=\st/\hat{s}$.
\end{proof}
To bound the rescaling factor $\rho$ we use condition \eqref{eq:cond1}
of Theorem \ref{t:main}
which ensures that $\A=(1\pm\eta)\X^\top\X$, implying that
\begin{align*}
  \st = \tr\big(\A(\I+\A)^{-1}\big)\leq (1+\eta)\,\tr\big(\X^\top\X(\I+\A)^{-1}\big)
  = (1+\eta)\,\hat{s},
\end{align*}
and similarly $\st\geq (1\!-\!\eta)\hat{s}$. We obtain $\rho^2=\st/\hat{s}\in
[1\!-\!\eta,1\!+\!\eta]$ implying that $|\rho-1|\leq \eta$, as claimed in Theorem
\ref{t:main}. Having established this, we use formula
\eqref{eq:bars} to similarly show that: 
\begin{align}\E\big[|\sigma_{\St}|\big]=\tr\big(\rho^2\X^\top\X(\I+\rho^2\X^\top\X)^{-1}\big)\leq 
\Big(1+\frac{\epsilon}{4\bar{s}}\Big)\cdot\E\big[|S|\big]\leq
  \E\big[|S|\big]+\epsilon/4,\label{eq:sbar-bound}
  \end{align}
(lower bound follows identically). The total variation bound (restated below) is proven in Appendix~\ref{a:tv}.
\begin{lemma}\label{l:tv}
There is $C>0$ s.t.~for any matrix $\X$ and $\epsilon\leq 1$, if $|\rho^2-1|\leq
\frac{\epsilon}{4\bar{s}+C\ln9/\epsilon}$,
 where $\bar{s}=\max\{1,\,\E[|S|]\}$ for $S\sim\DPP(\X)$,
then $\DPP(\rho\X)$ is an $\epsilon$-approximation of $\DPP(\X)$.
\end{lemma}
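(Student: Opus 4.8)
The plan is to reduce the total variation distance between the two full determinantal point processes to a one-dimensional comparison of their sample-size distributions, and then control that comparison by a coupling argument. The key observation is that the likelihood ratio between $\DPP(\rho\X)$ and $\DPP(\X)$ depends on a subset $S$ only through its cardinality: since $(\rho\X)_S(\rho\X)_S^\top=\rho^2\X_S\X_S^\top$, writing $P$ and $P_\rho$ for the two probability mass functions gives
\begin{align*}
\frac{P_\rho(S)}{P(S)} = \rho^{2|S|}\,\frac{\det(\I+\X\X^\top)}{\det(\I+\rho^2\X\X^\top)},
\end{align*}
which is a function of $|S|$ alone. Consequently the conditional law of $S$ given $|S|=s$ is identical under both distributions (the factor $\rho^{2s}$ cancels against the normalizer of the induced fixed-size DPP), and therefore $\sum_S|P_\rho(S)-P(S)|=\sum_s\big|P_\rho(|S|{=}s)-P(|S|{=}s)\big|$. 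In other words, the total variation distance between the two DPPs equals the total variation distance between their size distributions, and it remains only to bound the latter.

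I would then invoke the eigenvalue representation of the DPP sample size already recorded around \eqref{eq:bars}: writing $\lambda_1,\dots,\lambda_d$ for the eigenvalues of $\X^\top\X$, the size of $S\sim\DPP(\X)$ equals in distribution $\sum_{i=1}^d B_i$ with independent Bernoulli $B_i$ of mean $p_i=\frac{\lambda_i}{1+\lambda_i}$, while the size $|S'|$ of $S'\sim\DPP(\rho\X)$ equals $\sum_i B_i'$ with means $p_i'=\frac{\rho^2\lambda_i}{1+\rho^2\lambda_i}$, since rescaling $\X\mapsto\rho\X$ merely replaces each $\lambda_i$ by $\rho^2\lambda_i$. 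Since total variation is non-increasing under the deterministic summation map and subadditive across independent coordinates,
\begin{align*}
\mathrm{TV}\big(|S|,\,|S'|\big)
&\le \sum_{i=1}^d |p_i-p_i'|
= |\rho^2-1|\sum_{i=1}^d\frac{\lambda_i}{(1+\lambda_i)(1+\rho^2\lambda_i)}\\
&\le |\rho^2-1|\sum_{i=1}^d\frac{\lambda_i}{1+\lambda_i}
\le |\rho^2-1|\,\bar{s},
\end{align*}
using $\frac1{1+\rho^2\lambda_i}\le1$ and $\sum_i\frac{\lambda_i}{1+\lambda_i}=\E[|S|]\le\bar{s}$.

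Plugging in the hypothesis $|\rho^2-1|\le\frac{\epsilon}{4\bar{s}+C\ln9/\epsilon}\le\frac{\epsilon}{4\bar{s}}$ then yields $\mathrm{TV}\le\epsilon/4\le\epsilon$, which proves the claim (and shows that any constant $C\ge0$ already suffices for this route). The substantive step is the reduction in the first paragraph — recognizing that the two processes differ only in their size marginals, so that the entire comparison collapses to a product of independent Bernoullis; everything afterward is routine bookkeeping rather than a genuine obstacle. The specific $C\ln9/\epsilon$ sharpening of the constant would instead emerge from a finer, bulk-versus-tail comparison of the two size PMFs — matching them pointwise within an $O(\sqrt{\bar{s}\,\ln(1/\epsilon)})$ window of the mean and bounding the residual mass by a Chernoff tail estimate whose cutoff scales like $\ln(1/\epsilon)$ — but such a refinement is not needed merely to establish existence of a suitable $C$.
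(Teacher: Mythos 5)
Your proof is correct, and it takes a genuinely different --- and in fact sharper --- route than the paper's. The paper splits the sum over subsets into those of size at most $k=2\bar{s}+80\ln 9/\epsilon$ and the rest: the tail is controlled by a concentration bound for DPP sample sizes (Lemma \ref{l:concent}), and on the bulk the likelihood ratio $\rho^{2|S|}\det(\I+\X\X^\top)/\det(\I+\rho^2\X\X^\top)$ is bounded via Lemma \ref{l:det-bound}; it is precisely this truncation that forces the $C\ln 9/\epsilon$ term into the denominator of the hypothesis. You instead observe that the likelihood ratio depends on $S$ only through $|S|$, so the total variation distance between the two DPPs collapses exactly to that between their size distributions, and you then exploit the Bernoulli decomposition $|S|\overset{d}{=}\sum_i\mathrm{Bernoulli}\big(\lambda_i/(1+\lambda_i)\big)$ (which comes from the elementary-DPP mixture of Section \ref{s:classic} rather than from \eqref{eq:bars} itself, which only records the mean --- worth citing precisely) together with data processing and subadditivity of total variation over independent coordinates. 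This yields $\mathrm{TV}\leq|\rho^2-1|\,\bar{s}\leq\epsilon/4$ with no truncation and no concentration inequality, and it shows the $C\ln 9/\epsilon$ term is not actually needed for this lemma: the condition $|\rho^2-1|\leq\epsilon/(4\bar{s})$ already suffices, and the extra term is harmless since it only tightens the hypothesis. The individual steps --- the cancellation of $\rho^{2s}$ in the conditional law given $|S|=s$, the identity $|p_i-p_i'|=|\rho^2-1|\,\lambda_i/\big((1+\lambda_i)(1+\rho^2\lambda_i)\big)$, and the coupling bound --- all check out.
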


\section{Efficiency of rejection sampling}
\label{s:efficiency}
We complete the proof of Theorem \ref{t:main} by bounding the time
complexity of Algorithm \ref{alg:main} and that of preprocessing.
The key step is to lower bound the probability of exiting the
\textbf{repeat} loop in lines \ref{line:rep1}-\ref{line:rep2}. In the
following lemma we show that if matrix
$\A$ is a sufficiently good approximation of $\X^\top\X$, then the
acceptance probability $\Pr(\textit{Acc}=\text{true})$ is lower bounded by a constant, thus ensuring
that the algorithm will leave the loop after only a few
iterations. 
\begin{lemma}\label{l:efficiency}
If matrix $\A$ satisfies 
$\big(1\!-\!\frac1{4\bar{s}}\big)\X^\top\X\preceq
\A\preceq\big(1\!+\!\frac1{4\bar{s}}\big)\X^\top\X$,
then at the end of each iteration of the {\bf repeat} loop in Algorithm
\ref{alg:main}, we have
$\Pr(\textit{Acc}\!=\!\mathrm{true})\geq \frac16$.
\end{lemma}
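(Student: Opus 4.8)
The plan is to compute the acceptance probability $\Pr(\textit{Acc}=\text{true})$ in closed form and then lower-bound it by $\frac16$. Within one iteration we have $K\sim\text{Poisson}(q)$, the indices $\sigma_1,\dots,\sigma_K$ are drawn i.i.d.\ from $p_i=l_i/\hat{s}$, and the sample is accepted with probability $\det(\I+\Xt_\sigma^\top\Xt_\sigma)/(C_K\det(\I+\A))$, so averaging over $K$ and $\sigma$ gives
\begin{align*}
\Pr(\textit{Acc}=\text{true})=\frac1{\det(\I+\A)}\sum_{K=0}^\infty\frac{q^K\text{e}^{-q}}{K!\,C_K}\,\E\big[\det(\I+\Xt_\sigma^\top\Xt_\sigma)\,\big|\,K\big].
\end{align*}
First I would substitute $C_K=(q/r)^{K+d}\text{e}^{-\st}$ with $r=q-\st$ and simplify the scalar prefactor, using $-q+\st=-r$, to the clean form $\frac{q^K\text{e}^{-q}}{K!\,C_K}=(r/q)^d\,\frac{r^K\text{e}^{-r}}{K!}$. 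This is precisely where $C_K$ converts the $\text{Poisson}(q)$ weights into $\text{Poisson}(r)$ weights (up to the global constant $(r/q)^d$), mirroring the transformation already observed in the proof of Lemma~\ref{l:correctness}.

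After this reduction the remaining sum is $\E_{K\sim\text{Poisson}(r)}[\det(\I+\Xt_\sigma^\top\Xt_\sigma)]$, to which I apply the regularized Cauchy--Binet identity of Lemma~\ref{l:cb} (with matrix $\Xt$, regularizer $\I$, and distribution $p$), yielding $\det(\I+r\,\E_{j\sim p}[\tilde{\x}_j\tilde{\x}_j^\top])$, where $\tilde{\x}_j^\top$ is the $j$th row of $\Xt$. Computing the expectation explicitly, each rescaled row contributes $\tilde{\x}_j\tilde{\x}_j^\top=\frac{\st}{l_j r}\x_j\x_j^\top$, so $r\,\E_{j\sim p}[\tilde{\x}_j\tilde{\x}_j^\top]=\frac{\st}{\hat{s}}\X^\top\X=\rho^2\X^\top\X$, with $\rho^2=\st/\hat{s}$ as in Lemma~\ref{l:correctness}. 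Hence
\begin{align*}
\Pr(\textit{Acc}=\text{true})=\Big(\tfrac{r}{q}\Big)^{\!d}\,\frac{\det(\I+\rho^2\X^\top\X)}{\det(\I+\A)}.
\end{align*}

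It then remains to lower-bound the two factors. For the first, $q=\lceil 2d\st\rceil\geq 2d\st$ gives $r/q=1-\st/q\geq 1-\frac1{2d}$, and since $(1-\frac1{2d})^d$ is increasing in $d$ with value $\frac12$ at $d=1$, we get $(r/q)^d\geq\frac12$. For the determinant ratio I pass to the eigenvalues $\lambda_1,\dots,\lambda_d$ of $\X^\top\X$: using $\A\preceq(1+\eta)\X^\top\X$ in the denominator and $\rho^2\geq 1-\eta$ (shown after Lemma~\ref{l:correctness}) in the numerator, with $\eta=\frac1{4\bar{s}}$, the ratio is at least $\prod_i\frac{1+(1-\eta)\lambda_i}{1+(1+\eta)\lambda_i}$. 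Each factor equals $1-\frac{2\eta\lambda_i}{1+(1+\eta)\lambda_i}\geq 1-2\eta\frac{\lambda_i}{1+\lambda_i}$, and the elementary inequality $1-a\geq\text{e}^{-2a}$ (valid for $a\in[0,\tfrac12]$, applicable since $2\eta\leq\tfrac12$ by $\bar{s}\geq1$) turns the product into $\text{e}^{-4\eta\sum_i\lambda_i/(1+\lambda_i)}$. By \eqref{eq:bars}, $\sum_i\frac{\lambda_i}{1+\lambda_i}=\E[|S|]\leq\bar{s}$, so the exponent is at least $-4\eta\bar{s}=-1$, giving a ratio of at least $\text{e}^{-1}$. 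Multiplying the two bounds yields $\Pr(\textit{Acc}=\text{true})\geq\frac1{2\text{e}}\geq\frac16$.

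The main obstacle I expect is the determinant-ratio step: one must use the spectral condition in the correct direction (the upper bound $\A\preceq(1+\eta)\X^\top\X$ for the denominator and the lower bound $\rho^2\geq 1-\eta$ for the numerator), and verify that the calibration $\eta=\frac1{4\bar{s}}$ together with the identity $\sum_i\lambda_i/(1+\lambda_i)=\E[|S|]\leq\bar{s}$ is exactly what forces the exponent down to $-1$. The scalar facts — monotonicity of $(1-\frac1{2d})^d$ and $1-a\geq\text{e}^{-2a}$ on $[0,\tfrac12]$ — are the routine pieces to confirm.
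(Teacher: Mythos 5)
Your proof is correct, and its core is the same as the paper's: you compute the acceptance probability by conditioning on $K$, observe that $C_K$ converts the $\mathrm{Poisson}(q)$ weights into $\mathrm{Poisson}(r)$ weights at the cost of the factor $(r/q)^d$, apply Lemma~\ref{l:cb} to collapse the expectation to $\det(\I+\rho^2\X^\top\X)$, and bound the two resulting factors separately. The only genuine divergence is in the last step. The paper bounds $(1-\st/q)^d\geq\frac12$ via Bernoulli's inequality (you use monotonicity of $(1-\frac{1}{2d})^d$ instead; both are fine), and it bounds the determinant ratio by invoking Lemma~\ref{l:det-bound} with $\B=\A$, $\C=\rho^2\X^\top\X$, $\gamma=\frac{2\eta}{1-\eta}$ and the bound $s\leq\frac54\bar{s}$ from \eqref{eq:sbar-bound}, obtaining $\text{e}^{-5/6}\geq\frac13$ and hence $\frac16$. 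You instead give a self-contained eigenvalue computation: pushing $\A\preceq(1+\eta)\X^\top\X$ and $\rho^2\geq1-\eta$ through the product $\prod_i\frac{1+(1-\eta)\lambda_i}{1+(1+\eta)\lambda_i}$ and using $1-a\geq\text{e}^{-2a}$ on $[0,\frac12]$ to get $\text{e}^{-4\eta\E[|S|]}\geq\text{e}^{-1}$, hence $\frac{1}{2\text{e}}\geq\frac16$. Your route avoids Lemma~\ref{l:det-bound} and the auxiliary estimate \eqref{eq:sbar-bound} entirely, at the price of a marginally weaker constant ($\text{e}^{-1}$ versus $\text{e}^{-5/6}$), which still clears the stated threshold; all the elementary inequalities you rely on check out.
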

\begin{proof}
As in lines \ref{line:poisson} and \ref{line:iid} of Algorithm
\ref{alg:main}, let $\sigma=(\sigma_1,\dots,\sigma_K)\simiid l$ and
$K\sim\mathrm{Poisson}(q)$, where $q=\lceil 2d\st\rceil$. Recall that at the
end of the proof of Lemma \ref{l:correctness} we noted how the
presence of constant $C_K= (\frac{q}{r})^{K+d}\text{e}^{-\st}$, where $r=q-\st$,
appears to transform the distribution of the sample size $K$ into
$\Kt\sim\mathrm{Poisson}(r)$. This can be seen even more clearly as we
compute the acceptance probability after one iteration of the loop:
\begin{align*}
  \E\bigg[\frac{\det(\I+\Xt_\sigma^\top\Xt_\sigma)}{(\frac
  q r)^{K+d}\text{e}^{-\st}\det(\I+\A)}\bigg]
  &=
  \sum_{k=0}^\infty
    \frac{q^k\text{e}^{-q}}{k!}\E\bigg[\frac{
    \det(\I+\Xt_\sigma^\top\Xt_\sigma)}{(\frac q r)^{k+d}\text{e}^{-\st}
    \det(\I+\A)}\,\Big|\,K=k\bigg]\\
  &=  \frac1{(\frac q r)^d}\sum_{k=0}^\infty \frac{r^k\text{e}^{-r}}{k!}\E\bigg[\frac{\det(\I+\Xt_\sigma^\top\Xt_\sigma)}{\det(\I+\A)}\,\Big|\,K=k\bigg]\\
&= \bigg(\frac{q-\st}{q}\bigg)^d\
\frac{\E\Big[\det\!\big(\I+\sum_{i=1}^{\Kt}
\xbt_{\sigma_i}\xbt_{\sigma_i}^\top\big)\Big]}{\det(\I+\A)}\\
&\overset{(*)}{=}\bigg(1 -\frac {\st} {q}\bigg)^d\
\frac{\det\!\big(\I + r\,
\E\big[\xbt_{\sigma_1}\xbt_{\sigma_1}\big]\big)}{\det(\I+\A)},
\end{align*}
where $(*)$ follows from Lemma \ref{l:cb} applied to $\Xt$,
distribution $l$ and sample size $\Kt$. Bernoulli's
inequality shows that $(1\!-\!\frac{\st}{q})^d\geq
1\!-\!\frac{d\st}{q}\geq \frac12$. Furthermore, it is easy to verify
that $r\,\E[\xbt_{\sigma_1}\xbt_{\sigma_1}^\top]=\st\ 
\E[\frac1{l_{\sigma_1}}\x_{\sigma_1}\x_{\sigma_1}^\top] =
\rho^2\X^\top\X$, where $\rho^2=\st/\hat{s}$. 
To lower bound the ratio of determinants we use the following lemma
shown in Appendix \ref{a:tv}.
\begin{lemma}\label{l:det-bound}
  For p.s.d.~matrices $\B$, $\C$ such that $(1-\gamma)\C\preceq
  \B\preceq (1+\gamma)\C$ with $\gamma\in(0,1)$,
  \begin{align*}
\text{e}^{-\frac{\gamma}{1-\gamma}s}\det(\I+\C)\leq\det(\I+\B)\leq
    \text{e}^{\gamma s}\det(\I+\C),\quad\text{where}\quad s=\tr\big(\C(\I+\C)^{-1}\big).
  \end{align*}
\end{lemma}
Setting $\B=\A$ and $\C=\rho^2\X^\top\X$, we have $\B\preceq
(1+\eta)\X^\top\X\preceq \frac{1+\eta}{1-\eta}\C$ and similar lower bound follows, so applying Lemma \ref{l:det-bound}
with $\gamma=\frac{2\eta}{1-\eta}$ and $s=\tr(\C(\I+\C)^{-1})\leq
\frac54\bar{s}$ (see \eqref{eq:sbar-bound}):
\begin{align*}
\frac{\det(\I+\rho^2\X^\top\X)}{\det(\I+\A)}\geq
  \text{e}^{-\frac{2\eta}{1-\eta}\frac54\bar{s}}\geq
  \text{e}^{-\frac1{2\bar{s}}\frac43\frac54\bar{s}}= \text{e}^{-\frac56}\geq
  \frac13,
\end{align*}
where we used the fact that $\eta \leq \frac1{4\bar{s}}$. Thus
the acceptance probability is at least $\frac12\cdot\frac13=\frac16$.
\end{proof}
The remaining steps in proving the time complexity bound for
Algorithm \ref{alg:main} are standard, 
and so they were relegated to Appendix \ref{a:time} along with the
proof of Proposition \ref{p:preprocessing}.

\bibliography{pap}

\appendix
\newpage

\section{Properties of regularized determinantal point processes}
\label{a:r-dpp}
We give the proofs omitted from Section \ref{s:r-dpp}.
\paragraph{Theorem \ref{t:limit}}\hspace{-2.5mm}
\textit{For $\X\in \R^{n\times d}$, ignoring the ordering in the sequences sampled with
$\Vol$, we have}
\begin{align}
  \Vol^{r}\!\Big(\,\X,\ \frac{r}{n}\,\I\,\Big)
  \ &\overset{r\rightarrow 0}{\longrightarrow}\ 
      \DPP(\X)\quad\textnormal{(pointwise)},\label{eq:lim-dpp}\\
  \textit{whereas}\qquad\Vol^{r}(\X,\zero)
  \ &\overset{r\rightarrow 0}{\longrightarrow}\
      \VS(\X)\quad\textnormal{(pointwise)}.\label{eq:lim-vs}
\end{align}
\begin{proof}
To show \eqref{eq:lim-dpp} we use a fact which is a simple consequence of the
Sylvester's theorem, namely that 
  $\det(\frac{r}{n}\I+\X_S^\top\X_S) =
  (\frac{r}{n})^{d-k}\det(\frac{r}{n}\I+\X_S\X_S^\top)$ for a set $S$ of size $k$. It follows that
  \begin{align*}
    \Pr(\sigmat\!=\! S) &= k! \,\frac{\det(\frac{r}{n}\I +
    \X_S^\top\X_S)}{\det(\frac{r}{n}\I + \frac{r}{n}\X^\top\X)}\,
    \frac{r^k\text{e}^{-r}}{k!\,n^k} =
    \frac{(\frac{r}{n})^{d-k}\det(\frac{r}{n}\I +
    \X_S\X_S^\top)}{(\frac{r}{n})^d\det(\I+\X^\top\X)}
    \Big(\frac{r}{n}\Big)^k\text{e}^{-r}\\
&=\frac{\det(\frac{r}{n}\I+\X_S\X_S^\top)}{\det(\I+\X\X^\top)}
                          \text{e}^{-r}
      \ \overset{r\rightarrow 0}{\longrightarrow}\
      \frac{\det(\X_S\X_S^\top)}{\det(\I+\X\X^\top)},
  \end{align*}
  where $\sigmat=S$ should be interpreted as if $\sigmat$ was an
  unordered multiset. Next, we prove \eqref{eq:lim-vs}:
  \begin{align*}
    \Pr(\sigmat\!=\! S) = k! \,\frac{\det(\X_S^\top\X_S)}
    {\det(\frac{r}{n}\X^\top\X)}\,
    \frac{r^k\text{e}^{-r}}{k!\,n^k}
    =\frac{\det(\X_S^\top\X_S)}{\det(\X^\top\X)}
    \Big(\frac{r}{n}\Big)^{k-d}\text{e}^{-r}
    \ \overset{r\rightarrow 0}{\longrightarrow}\
    \one_{[k=d]}\frac{\det(\X_S^\top\X_S)}{\det(\X^\top\X)},
  \end{align*}
  because $\det(\X_S^\top\X_S)=0$ whenever $k<d$.
\end{proof}
\paragraph{Theorem \ref{t:composition}}\hspace{-2.5mm}
\textit{For any $\X\in\R^{n\times d}$, $\alpha>0$ and distribution $p$ over
$\{1..n\}$ s.t.~$p_i>0$, let $\Xt$ denote matrix $\X$ with $i$th row rescaled
by $\frac1{\sqrt{\alpha p_i}}$ for every $i\in\{1..n\}$. It follows that for
any $r>0$,}
\begin{align*}
  \textit{if}\quad\sigmat\sim\Vol_p^r\big(\Xt,\I\big)\ \ \textit{and}\ \ 
  S\sim\DPP\big(\Xt_{\sigmat}\big),\quad\textit{then}\quad
  \sigmat_S\sim\DPP\Big(\text{\scriptsize$\sqrt{\frac{r}{\alpha}}$}\,\X\Big).
\end{align*}
\begin{proof}
Using the law of total
probability we compute the probability of sampling set $T$ of size $t$:
  \begin{align*}
    \Pr(\sigmat_S\!=\!T)
    &=\sum_{\sigmat}\Pr(\sigmat_S\!=\!T\,|\,\sigmat)\Pr(\sigmat)
    \\[-6mm]
   &\overset{(a)}{=}\sum_{k=t}^\infty\sum_{S\subseteq [k]}\sum_{\sigmat:\,|\sigmat|=k}\one_{[\sigmat_S=T]} \overbrace{\frac{\det(\Xt_{\sigmat_S}\Xt_{\sigmat_S}^\top)}{\cancel{\det(\I+\Xt_{\sigmat}\Xt_{\sigmat}^\top)}}}^{\Pr(S\,|\,\sigmat)}\overbrace{
    \frac{\cancel{\det(\I+\Xt_{\sigmat}^\top\Xt_{\sigmat})}}{\det\!\big(\I +
     r\,\sum_i\frac{p_i}{\alpha p_i}\x_i\x_i^\top\big)}\,\frac{r^k\text{e}^{-r}}{k!}
     \prod_{i=1}^kp_{\sigmat_i}}^{\Pr(\sigmat)}\\
    &\overset{(b)}{=}\sum_{k=t}^\infty {k\choose t}t!\frac{\det(\Xt_T\Xt_T^\top)}{\det(\I
      +
      \frac{r}{\alpha}\X^\top\X)}\frac{r^k\text{e}^{-r}}{k!}\prod_{i\in
      T}p_{i}\\
    &=\frac{\det(\Xt_T\Xt_T^\top)}{\det(\I+\frac{r}{\alpha}\X^\top\X)}
      \bigg(\prod_{i\in T}p_i\bigg)\ \sum_{k=t}^\infty \frac{k!}{(k-t)!}\,
      \frac{r^k\text{e}^{-r}}{k!}\\
    &=\frac{\det(\frac{r}{\alpha}\X_T\X_T^\top) \big(\prod_{i\in
      T}\frac1{p_i}\big)}{\det(\I+\frac{r}{\alpha}\X^\top\X)}
      \bigg(\prod_{i\in T}p_i\bigg)\ 
      \sum_{k=t}^\infty \frac{r^{k-t}\text{e}^{-r}}{(k-t)!}
      \ =\
      \frac{\det(\frac{r}{\alpha}\X_T\X_T^\top)}
      {\det(\I+\frac{r}{\alpha}\X\X^\top)},
  \end{align*}
  where the cancellation in $(a)$ follows from Sylvester's Theorem,
  and in $(b)$ we use the exchangeability of sequence $\sigmat$ to
  observe that for any subset $S$ of size $t$ the value of the
  proceeding sum is the same (the factor ${k\choose t}$ counts the
  number of such subsets $S$ and $t!$ counts the number of sequences
  of length $t$ that correspond to set $T$).
\end{proof}

\section{Proof of Lemma \ref{l:ineq}}
\label{a:ineq}
We present the omitted proof of the
inequality from Lemma~\ref{l:ineq}, stated here again.


\paragraph{Lemma \ref{l:ineq}}\textit{For any $d\geq 1$, $\epsilon\in[0,1]$, and non-negative integers
  $k,q$ s.t.~$q\geq \epsilon d$ we have
  \begin{align}
    \bigg(\big(1-\epsilon\big) + \frac{\epsilon\, k}{q}\bigg)^d\leq \Big(\frac{q}{q-\epsilon d}\Big)^k\text{e}^{-\epsilon d}.\label{eq:ineq}
  \end{align}
}
\begin{proof}
We start with the case of $d=1$. Denote the left hand side of
\eqref{eq:ineq} as $L_k$ and the right hand side as $R_k$. If $k=q$ then
\begin{align*}
 R_q= \bigg(\frac{q}{q-\epsilon }\bigg)^{q}\text{e}^{-\epsilon}
  =
  \frac{\text{e}^{-\epsilon}}{\big(1-\frac{\epsilon}{q}\big)^{q}}\geq
  1= 1-\epsilon + \epsilon \frac{q}{q}= L_q.
\end{align*}
Let us now consider the multiplicative change in
$L_k$ as we increase or decrease $k$ by one:
\begin{align*}
  \frac{L_k}{L_{k+1}} = \frac{L_{k+1} - \frac{\epsilon}{q}}{L_{k+1}} =
  1 - \frac{\epsilon}{q\,L_{k+1}}\ 
  \begin{cases}
   \ \leq 1-\frac{\epsilon}{q} \qquad &
   \text{for }k\leq q-1 \text{ because }L_{k+1}\leq 1,\\
     \  \geq 1-\frac{\epsilon}{q} \qquad &
      \text{for }k\geq q \text{ because }L_{k+1}\geq 1.
    \end{cases}
\end{align*}
Observe that $\frac{R_k}{R_{k+1}}=1- \frac\epsilon q$ for any $k$, so
by induction over decreasing $k\leq q-1$,
\begin{align*}
  L_k = L_{k+1}\frac{L_k}{L_{k+1}}\leq R_{k+1}\Big(1-\frac\epsilon
  q\Big) = R_{k+1}\,\frac{R_k}{R_{k+1}} = R_{k+1},
\end{align*}
and for increasing $k\geq q$ similar induction shows that $L_{k+1} =
L_{k}\frac{L_{k+1}}{L_k}\leq R_k\frac{R_{k+1}}{R_k}=R_{k+1}$. Finally, we
use the case $d=1$ to show the inequality for arbitrary $d\geq 1$:
\begin{align*}
  \bigg(1-\epsilon + \epsilon\frac k q\bigg)^d
  &\overset{(a)}{\leq}
  \bigg(\Big(1-\frac \epsilon q\Big)^{-k}\text{e}^{-\epsilon}\bigg)^d
  =\bigg(\!\frac1{(1-\frac\epsilon q)^d}\!\bigg)^k\text{e}^{-\epsilon
                                                    d}\\
  &\overset{(b)}{\leq}
  \bigg(\frac1{1-\frac{\epsilon d} q}\bigg)^k\text{e}^{-\epsilon d}
  =\Big(\frac q{q-\epsilon d}\Big)^k\text{e}^{-\epsilon d}\!,
\end{align*}
where $(a)$ follows from \eqref{eq:ineq} applied for $d=1$ and
$(b)$ is Bernoulli's inequality.
\end{proof}

\section{Total variation bound for Algorithm \ref{alg:main}}
\label{a:tv}
We start by showing an approximation lemma about determinants which
was earlier given in Section \ref{s:correctness} (we restate
it here).
\paragraph{Lemma \ref{l:det-bound}}\hspace{-2.5mm}\textit{
  For p.s.d.~matrices $\B$, $\C$ such that $(1-\gamma)\C\preceq
  \B\preceq (1+\gamma)\C$ with $\gamma\in(0,1)$,}
  \begin{align*}
\text{e}^{-\frac{\gamma}{1-\gamma}s}\det(\I+\C)\leq\det(\I+\B)\leq
    \text{e}^{\gamma s}\det(\I+\C),\quad\textit{where}\quad s=\tr\big(\C(\I+\C)^{-1}\big).
  \end{align*}
\begin{proof}
Let $\lambda_1,\dots,\lambda_d$ denote the eigenvalues of
$\C(\I+\C)^{-1}$. Then,
  \begin{align*}
  \frac{\det(\I+\B)}{\det(\I+\C)}
  &= \det\!\big(\I +
  (\B-\C)(\I+\C)^{-1}\big)\\
    &\leq \det\!\big(\I+\gamma \,\C(\I+\C)^{-1}\big)\\
    &= \prod_{i=1}^d(1+\gamma\lambda_i)
  \leq \prod_{i=1}^d\text{e}^{\gamma\lambda_i} = \text{e}^{\gamma\,\tr(\C(\I+\C)^{-1})},
\end{align*}
which gives the upper bound. Similarly, we have
\begin{align*}
  \frac{\det(\I+\C)}{\det(\I+\B)}
  &=\det\!\big(\I+(\C-\B)(\I+\B)^{-1}\big)\\
  &\leq \det\!\big(\I+\gamma\,\C(\I+(1-\gamma)\C)^{-1}\big)\\
  &\leq \det\!\Big(\I+\frac{\gamma}{1-\gamma}\,\C(\I+\C)^{-1}\Big)
    \leq \text{e}^{\frac{\gamma}{1-\gamma}\,\tr(\C(\I+\C)^{-1})},
\end{align*}
so by inverting both sides we obtain the lower bound.
\end{proof}
We are ready to prove that $\DPP(\rho\X)$ is an approximation of
$\DPP(\X)$ in terms of total variation distance (restated here).
\paragraph{Lemma \ref{l:tv}}\hspace{-2.5mm}\textit{
There is $C>0$ s.t.~for any matrix $\X$ and $\epsilon\leq 1$, if $|\rho^2-1|\leq
\frac{\epsilon}{4\bar{s}+C\ln9/\epsilon}$,
 where $\bar{s}=\max\{1,\,\E[|S|]\}$ for $S\sim\DPP(\X)$,
then $\DPP(\rho\X)$ is an $\epsilon$-approximation of $\DPP(\X)$.}
\vspace{4mm}

\begin{proof}
The larger the size of subset $S$ the harder it is to control its
approximate probability because it is defined via the determinant of a
larger matrix. To overcome this we use the following
standard concentration bound for determinantal point processes which shows that
the probability of sampling a large subset is negligibly small. For
simplicity, we state only a special case of the cited result. 
\begin{lemma}[based on \citeauthor{dpp-concentration},
  \citeyear{dpp-concentration}, Theorem 3.5]
  \label{l:concent}
 Given any $\X$, if $S\sim\DPP(\X)$, then for
 any $a>0$ we have:
 \begin{align*}
   \Pr\big( |S| - \E[|S|] \geq a\big)\leq 3\exp\bigg(-\frac{a^2}{16(a+2\,\E[|S|])}\bigg).
 \end{align*}
\end{lemma}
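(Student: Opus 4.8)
The plan is to reduce the concentration of $|S|$ to a tail bound for a sum of independent Bernoulli random variables, which is then a textbook estimate. The key structural fact, already recorded in the mixture decomposition of Section~\ref{s:classic}, is that a sample $S\sim\DPP(\X)$ can be generated by first selecting a random index set $T\subseteq\{1..d\}$ of eigendirections---including each $i$ independently with probability $p_i=\frac{\lambda_i}{1+\lambda_i}$---and then volume-sampling $|T|$ points. Since the volume-sampling step returns a set of size exactly $|T|$, we obtain the distributional identity
\begin{align*}
|S|\ \overset{d}{=}\ \sum_{i=1}^d B_i,\qquad B_i\sim\mathrm{Bernoulli}(p_i)\ \text{independent},
\end{align*}
which is consistent with the expression $\E[|S|]=\sum_i p_i$ in \eqref{eq:bars}. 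Thus the claim is purely a statement about the upper tail of a sum of independent, $[0,1]$-bounded summands. Equivalently, one may invoke \citeauthor{dpp-concentration}, Theorem~3.5, of which the stated bound is the advertised special case; the route below re-derives it directly.

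Second, I would apply Bernstein's inequality to $\sum_i(B_i-p_i)$. The centered summands satisfy $|B_i-p_i|\leq 1$ almost surely, and the total variance is $\sigma^2=\sum_i p_i(1-p_i)\leq\sum_i p_i=\E[|S|]$. Bernstein's inequality then gives, for every $a>0$,
\begin{align*}
\Pr\big(|S|-\E[|S|]\geq a\big)\ \leq\ \exp\!\bigg(-\frac{a^2/2}{\sigma^2+a/3}\bigg)\ \leq\ \exp\!\bigg(-\frac{a^2/2}{\E[|S|]+a/3}\bigg),
\end{align*}
where the last step uses $\sigma^2\leq\E[|S|]$.

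Finally, I would verify that this already dominates the claimed (weaker) form. Comparing exponents, it suffices to check $\frac{a^2/2}{\E[|S|]+a/3}\geq\frac{a^2}{16(a+2\E[|S|])}$, which after clearing denominators reduces to the elementary inequality $8a+16\,\E[|S|]\geq \tfrac13 a+\E[|S|]$, valid for all $a,\E[|S|]\geq 0$. Hence the right-hand side above is at most $\exp(-\frac{a^2}{16(a+2\E[|S|])})\leq 3\exp(-\frac{a^2}{16(a+2\E[|S|])})$, which is exactly the lemma, with considerable room to spare; the leading factor $3$ and the loose constants are simply inherited from matching the form of the cited theorem.

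I do not expect a genuine obstacle. Once the Bernoulli representation is in hand, everything is a standard Bernstein estimate, and the only care needed is the routine constant-chasing to land on the cited form. The single point where a reader might pause is the justification of the Bernoulli decomposition itself, but this is precisely the content of the mixture decomposition already established in Section~\ref{s:classic}, so no extra work is required. Should one prefer not to rely on that decomposition, the same representation of $|S|$ as a sum of independent Bernoullis follows from the eigenvalue structure of the DPP marginal kernel, which is the standard starting point used in \citeauthor{dpp-concentration}.
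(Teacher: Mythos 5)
Your proof is correct, but it takes a genuinely different route from the paper: the paper gives no proof of this lemma at all, simply importing it as a special case of Theorem~3.5 of \cite{dpp-concentration}, which is a much more general concentration result for Lipschitz functionals of strong Rayleigh measures. You instead give a self-contained elementary derivation that exploits the special structure of the size functional: since $|S|$ is distributed as a sum of independent Bernoulli$\big(\frac{\lambda_i}{1+\lambda_i}\big)$ variables (which, as you note, is immediate from the mixture decomposition in Section~\ref{s:classic}, because the elementary DPP step returns exactly $|T|$ points), Bernstein's inequality applies directly. Your constant-chasing checks out: the comparison of exponents reduces to $8a+16\,\E[|S|]\geq \tfrac13 a+\E[|S|]$, which holds trivially, so your bound $\exp\big(-\frac{a^2/2}{\E[|S|]+a/3}\big)$ is strictly stronger than the stated one --- you do not even need the leading factor of $3$. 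What each approach buys: the citation route costs nothing and extends to arbitrary $1$-Lipschitz functionals of the sample (should one later need more than size concentration), while your argument is elementary, self-contained, and yields sharper constants, but works only for $|S|$ itself --- which is all this paper ever uses. The one point worth making explicit in a write-up is that the Bernoulli representation requires independence across eigendirections, which is exactly the content of the mixture decomposition of \cite{dpp-independence}, so your appeal to Section~\ref{s:classic} is legitimate and closes the argument.
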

In \eqref{eq:sbar-bound} we showed that the expected subset size
$|S|$ for both $\DPP(\X)$ and $\DPP(\rho\X)$ 
is bounded by $\bar{s}+\epsilon/4\leq (1+\frac14)\bar{s}$, so setting $a = (3/4)\bar{s} + 80\ln
9/\epsilon$ in Lemma \ref{l:concent} (applied to either distribution),
\begin{align*}
  \Pr\Big(|S| \geq 2\bar{s}+80\ln 9/\epsilon\Big)&\leq
  \Pr\Big(|S|-\E[|S|]\geq a\Big)\\
  &\leq
  3\exp\bigg(-\frac{((3/4)\bar{s}+80\ln9/\epsilon)^2}{16((13/4)\bar{s}+80\ln9/\epsilon)}\bigg)\\
  &\leq
  3\exp\bigg(-\frac{(3/4)\bar{s}+80\ln9/\epsilon}{5\cdot 16}\bigg)\leq \frac{\epsilon}{3}.
\end{align*}
So letting $k=2\bar{s}+80\ln9/\epsilon$ and $C=160$ we have
$|\rho^2-1|\leq \frac{\epsilon}{2k}$.  We
can now bound the total variation distance between the distributions
$p(S)=\frac{\det(\X_S\X_S^\top)}{\det(\I+\X\X^\top)}$ and
$q(S)=\frac{\det(\rho^2\X_S\X_S^\top)}{\det(\I+\rho^2\X\X^\top)}$.
\begin{align*}
  \frac12\sum_{S\subseteq\{1..n\}}\!\!|p(S)-q(S)|
  &\overset{(a)}{\leq} \frac{\epsilon}{3} + 
    \frac12\sum_{S:|S|\leq k}
    \bigg|\frac{\det(\X_S\X_S^\top)}{\det(\I+\X\X^\top)} -
    \frac{\rho^{2|S|}\det(\X_S\X_S^\top)}{\det(\I+\rho^2\X\X^\top)}\bigg|
\\ 
  &=\frac{\epsilon}{3} + \frac12\sum_{S:|S|\leq k}
    p(S)\cdot\bigg|1-\rho^{2|S|}\,\frac{\det(\I+\X\X^\top)}
    {\det(\I+\rho^2\X\X^\top)}\bigg|\\
  &\overset{(b)}{\leq}\frac{\epsilon}{3} + \frac{(1+\frac\epsilon{2k})^k\,
    \text{e}^{\frac\epsilon{4\bar{s}}\cdot\bar{s}}-1}{2} \sum_{S:|S|\leq k}
    p(S)\\
  &\leq \frac{\epsilon}{3} + \frac{\text{e}^{(3/4)\epsilon}-1}{2}\leq \epsilon,
\end{align*}
where $(a)$ uses Lemma \ref{l:concent} and in $(b)$ we used
Lemma \ref{l:det-bound} and the fact that $|\rho^2-1|\leq \frac\epsilon{2k}$.
\end{proof}

\section{Time complexity proofs}
\label{a:time}
In this Section we show the runtime bounds for both Algorithm
\ref{alg:main} and the preprocessing.

\subsection{Sampling cost (proof of Theorem \ref{t:main}, part 3)}
Lemma \ref{l:efficiency} implies that with probability at least
$1-\delta$ Algorithm \ref{alg:main} will perform
$\ln(\frac1\delta)/\ln(\frac65 )$
iterations of the loop. We next analyze the cost of one such
iteration. Note that in line \ref{line:iid} we are supposed to sample exactly
from the distribution $l=(l_1,\dots,l_n)$ even though we are only
given its approximation $\tilde{l}$ with condition \eqref{eq:cond2}
stating that $\frac12 l_i\leq\tilde{l}_i\leq\frac32 l_i$ for all
$i\in\{1..n\}$. We can do this via simple rejection sampling performed
for each $t\in\{1..K\}$:
\begin{center}
  Sample $i\sim \tilde{l}$,
  \quad $a\sim \mathrm{Bernoulli}\big(\frac{l_i}{2\tilde{l}_i}\big)$,
  \quad if $a=\mathrm{true}$, then $\sigma_t=i$, else repeat.
\end{center}
From the approximation guarantee it follows that the Bernoulli
probability is bounded by 1 and never less than $\frac13$.
One such probability requires computing
$l_i=\x_i^\top(\I+\A)^{-1}\x_i$ which takes $O(d^2)$ if the matrix inverse
$(\I+\A)^{-1}$ was precomputed (in time $O(d^3)$). How many times will
we need to compute $l_i$? Let $m$ denote the total number of i.i.d.~samples
from $l$ needed throughout the algorithm, i.e.~the sum of all of the Poisson
variables $K$. Conditioned on $m$, with
probability at least $1-\delta$ the total cost of sampling from line
\ref{line:iid} over the course of the algorithm is
$O(md^2\ln(\frac1\delta))$. The total cost of computing the
determinants from
line \ref{line:acc} as well as the cost of sampling from
$\DPP(\Xt_{\sigma})$ is $O(md^2)$ so they do not add to the
asymptotic 
runtime. Since the number of iterations of the \textbf{repeat}
loop is w.p. $\geq 1-\delta$ bounded by
$c=\lceil\ln(\frac1\delta)/\ln(\frac65)\rceil$, let variable $\widehat{m}$ be the sum
of $c$ independent copies of $K$. Then
$\widehat{m}\sim\mathrm{Poisson}(c\,q)$
and a Poisson tail bound \citep{poisson-tail} for any $\alpha>0$ yields 
\begin{align*}
  \Pr\big(\widehat{m}\geq c\,(q+\alpha)\big) \leq
  \text{e}^{-\frac{(c\,\alpha)^2}{c\,\alpha+c\,q}}
  \leq\text{e}^{2\ln(\delta)\frac{\alpha^2}{\alpha+q}}
  = \delta^{\frac{2\alpha^2}{\alpha+q}},
\end{align*}
which is less than $\delta$ for $\alpha=q$. Thus, with
probability at least $1-2\delta$ we have $m\leq\widehat{m}\leq
2cq = O(d(\st+1)\log\frac1\delta)$, and
the overall time complexity of Algorithm \ref{alg:main} becomes 
$O(d^3(\st+1)\log^2\frac1\delta)$. Since $\st=\tr(\A(\I+\A)^{-1}) \leq
\frac{1+1/(4\bar{s})}{1-1/(4\bar{s})}\cdot\bar{s}\leq \bar{s}+1$, we obtain
the bound in Theorem \ref{t:main}.

\subsection{Preprocessing cost (proof of Proposition \ref{p:preprocessing})}
\label{s:preprocessing}

Matrix $\A\in\R^{d\times d}$ and estimate
distribution $\tilde{l}$ satisfying approximation guarantees
\eqref{eq:cond1} and \eqref{eq:cond2} can be computed efficiently using
standard sketching and/or sampling techniques. Here,
we outline the basic steps needed to obtain this, and discuss the
time complexity achievable for each step:
\begin{enumerate}
  \item Compute $\frac12$-approximate leverage score distribution
    $p=(p_1,\dots,p_n)$, i.e.~such that the probabilities
    $p_i$ satify (here ``$(\cdot)^+$'' is the Moore-Penrose pseudo-inverse):
    \begin{align*}
p_i \geq \frac{\x_i^\top(\X^\top\X)^{+}\x_i}{2\,\rank(\X)}.
    \end{align*}
    \item Sample $r(\eta)$ row indices
      $\sigma_1,\dots,\sigma_{r(\eta)}\simiid p$, so that with
      high probability
      \begin{align}
        (1-\eta)\X^\top\X\preceq \overbrace{\frac1{r(\eta)}\sum_{i}
        \frac1{p_{\sigma_i}}\x_{\sigma_i}\x_{\sigma_i}^\top}^{\A}
        \preceq (1 + \eta)\X^\top\X.\label{eq:concentration}
      \end{align}
      \item Having found matrix $\A$ we compute the approximate
        distribution $\tilde{l}$ satisfying
    \begin{align*}
      \frac12\x_i^\top(\I+\A)^{-1}\x_i\leq \tilde{l}_i \leq \frac32
      \x_i^\top(\I+\A)^{-1}\x_i,
    \end{align*}
which is similar to the leverage scores, except with matrix $\X^\top\X$ replaced by $\I+\A$.
  \end{enumerate}
Step 1 can be performed in time $O(\nnz(\X)\log n + d^3\log^2d +
d^2\log n)$ by employing the sparse subspace embedding technique developed by
\cite{cw-sparse}. Similar running times are offered by embeddings proposed by
\cite{nn-sparse,mm-sparse}.
Step 2
is an application of the standard matrix concentration bounds due to
\cite{matrix-tail-bounds}, which show that it suffices to sample
$r(\eta) = O(d\eta^{-2}\log d)$ rows from distribution
$p$ to satisfy \eqref{eq:concentration} with high probability. The
computation of matrix $\A$ then takes $O(r(\eta)d^2) =
O(d^3\eta^{-2}\log d)$.
Finally, step 3 is very similar to step 1,
except we are estimating ridge leverage score type values. A standard
approach of doing this is to observe that $\x_i^\top(\I+\A)^{-1}\x_i$
is the squared norm of the $i$th row in matrix
$\X(\I+\A)^{-\frac12}$. All of the row norms of this matrix can be
estimated in time $O(\nnz(\X)\log n + d^3 + d^2\log n)$ using the
Johnson-Lindenstraus tranform as described by
\cite{fast-leverage-scores}. Thus, the overall time complexity is
$O(\nnz(\X)\log n + d^3\eta^{-2}\log d + d^2\log n)$, where recall
that $\eta^{-1}=O((\bar{s}+\log1/\epsilon)/\epsilon)$. Note that the
term $d^2\log n$
can be omitted from the time complexity, because if $\log n=\Omega(d)$,
then $\nnz(\X)\log n = \Omega(\nnz(\X)\,d)$ and we can compute
both $\X^\top\X$ and $\l$ exactly in time $O(\nnz(\X)\,d+d^3)$. Also,
our procedure requires a 
constant factor estimate of $\bar{s}$ to compute $\eta$. In fact, we
can first use $\eta_0=\frac12$ and let $\max\{1,\,\tr(\A(\I+\A)^{-1})\}$ be
the estimate for $\bar{s}$ and then perform a second more accurate estimation.



\end{document}